\newtheorem{theorem}{Theorem}[section]
\newtheorem{definition}[theorem]{Definition}
\newtheorem{lemma}[theorem]{Lemma}
\newtheorem{proposition}[theorem]{Proposition}
\newcommand\shrink[1]{}
\def\n(#1){\bar{#1}}
\def\W{{\bf W}}
\def\X{{\bf X}}
\def\x{{\bf x}}
\def\eql(#1,#2){{#1\!\!=\!#2}}
\def\eql(#1,#2){{#1\!=\!#2}}
\newcommand\name[1]{\ensuremath{\mathsf{#1}}}
\def\clap#1{\hbox to 0pt{\hss#1\hss}}
\newcommand\Ainput[1]{
\vspace{1mm}
\textbf{input:} #1
}
\newcommand\Aoutput[1]{
\vspace{1mm}
\textbf{output:} #1
\vspace{1mm}
}
\def\thm@space@setup{%
  \thm@preskip=0.2cm plus 0cm minus 0cm
  \thm@postskip=0cm plus 0cm minus 0cm
}
\renewenvironment{proof}[1][\proofname]{\par
  \pushQED{\qed}%
  \normalfont
  \topsep0pt \partopsep0pt 
  \trivlist
  \item[\hskip\labelsep
        \itshape
    #1\@addpunct{.}]\ignorespaces
}{%
  \popQED\endtrivlist\@endpefalse
}
\renewcommand{\algorithmiccomment}[1]{\bgroup\hfill//~#1\egroup}
\title{Smoothing Structured Decomposable Circuits}
\author{%
  Andy Shih\\
  University of California, Los Angeles\\
  \texttt{andyshih@cs.ucla.edu} \\
  \And
  Guy Van den Broeck\\
  University of California, Los Angeles\\
  \texttt{guyvdb@cs.ucla.edu} \\
  \And
  \phantom{0000000}Paul Beame\\
  \phantom{0000000}University of Washington\\
  \texttt{\phantom{0000000}beame@cs.washington.edu} \\
  \And
  \phantom{00}Antoine Amarilli\\
  \phantom{00}LTCI, Télécom Paris, IP Paris\\
  \texttt{\phantom{00}antoine.amarilli@telecom-paris.fr} \\
}
\begin{document}

\maketitle

\begin{abstract}

We study the task of \emph{smoothing} a circuit, i.e., ensuring that all children of a \(\oplus\)-gate mention the same variables. 
Circuits serve as the building blocks of state-of-the-art inference algorithms on discrete probabilistic graphical models and probabilistic programs. They are also important for discrete density estimation algorithms.
Many of these tasks require the input circuit to be smooth. However, smoothing has not been studied in its own right yet, and only a trivial quadratic algorithm is known. This paper studies efficient smoothing for structured decomposable circuits. We propose a near-linear time algorithm for this task and explore lower bounds for smoothing decomposable circuits, using existing results on range-sum queries. Further, for the important case of All-Marginals, we show a more efficient linear-time algorithm. We validate experimentally the performance of our methods.
\end{abstract}

\section{Introduction}
Circuits are directed acyclic graphs that are used for many logical and probabilistic inference tasks. Their structure captures the computation of reasoning algorithms. In the context of machine learning, state-of-the-art algorithms for exact and approximate inference in discrete probabilistic graphical models~\citep{Chavira2008OnPI,Kisa2014ProbabilisticSD,FriedmanNeurIPS18} and probabilistic programs~\citep{Fierens2015InferenceAL,Bellodi2013ExpectationMO} are built on circuit compilation. In addition, learning tractable circuits is the current method of choice for discrete density estimation~\citep{Gens2013LearningTS,Rooshenas2014LearningSN,Vergari2015SimplifyingRA,Liang2017LearningTS}. Circuits are also used to enforce logical constraints on deep neural networks~\citep{Xu2018ASL}.

Most of the probabilistic inference algorithms on circuits actually require the input circuit to be \emph{smooth} (also referred to as \emph{complete})~\citep{Sang2005PerformingBI,Poon2011SumproductNA}. The notion of smoothness was first introduced by~\cite{Darwiche2001OnTT} to ensure efficient model counting and cardinality minimization and has since been identified as essential to probabilistic inference algorithms. Yet, to the best of our knowledge, no efficient algorithm to smooth a circuit has been proposed beyond the original quadratic algorithm by~\cite{Darwiche2001OnTT}.

The quadratic complexity can be a major bottleneck, since circuits in practice often have hundreds of thousands of edges when learned, and millions of edges when compiled from graphical models. As such, in the latest Dagstuhl Seminar on ``Recent Trends in Knowledge Compilation'', this task of smoothing a circuit was identified as a major research challenge~\citep{Darwiche2017RecentTI}. Therefore, a more efficient smoothing algorithm will increase the scalability of circuit-based inference algorithms.

Intuitively, smoothing a circuit amounts to filling in the missing variables under its \(\oplus\)-gates. In Figure~\ref{fig:unsmoothed} we see that the \(\oplus\)-gate does not mention the same variables on its left side and right side, so we fill in the missing variables by adding tautological gates of the form \(x_i \oplus\ -x_i\), resulting in the smooth circuit in Figure~\ref{fig:smoothed}. Filling in these missing variables is necessary for probabilistic inference tasks such as computing marginals, computing probability of evidence, sampling, and approximating Maximum A Posteriori inference~\citep{Sang2005PerformingBI,Chavira2008OnPI,Friesen2016TheST,FriedmanNeurIPS18, Mei2018MaximumAP}. The task of smoothing was also explored by~\cite{Peharz2017OnTL}, where they look into preserving smoothness when augmenting Sum-Product Networks for computing Most Probable Explanations.

\begin{figure}[t]
    \centering
    \begin{subfigure}[b]{0.15\linewidth}
        \centering
        \includegraphics[width=\linewidth]{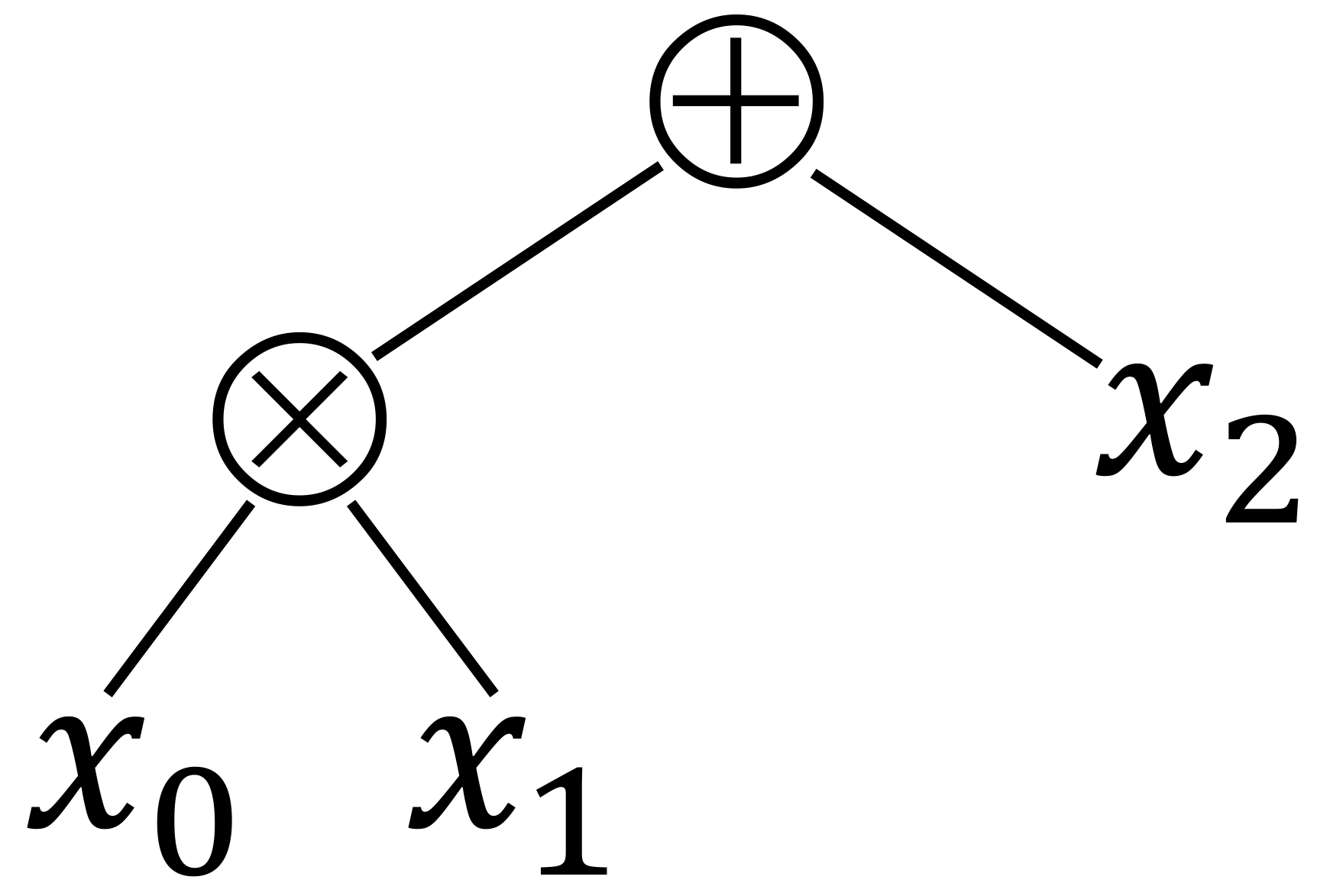}
        \caption{A circuit.} \label{fig:unsmoothed}
    \end{subfigure}
    \hspace{0.1\linewidth}
    \begin{subfigure}[b]{0.4\linewidth}
        \centering
        \includegraphics[width=\linewidth]{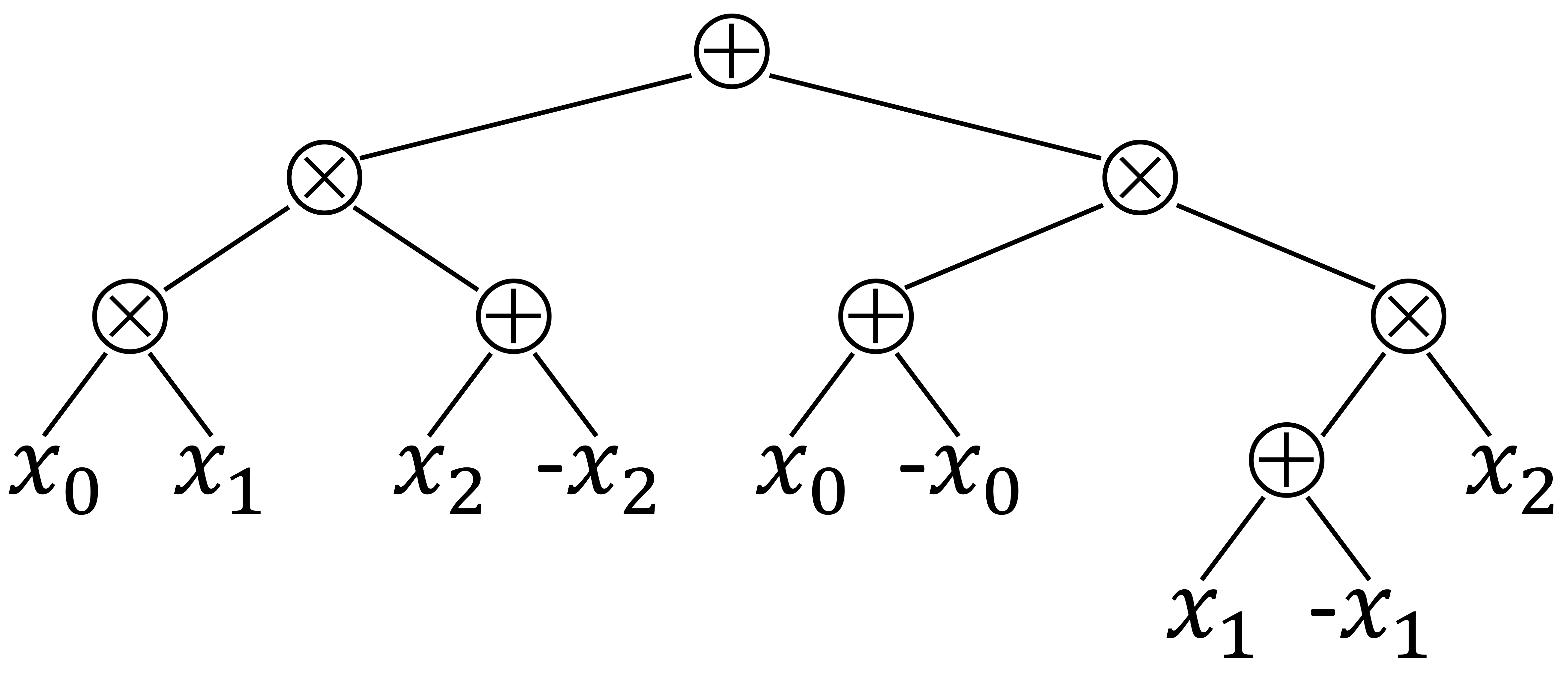}
        \caption{A smooth circuit.} \label{fig:smoothed}
    \end{subfigure}
\caption{Two equivalent circuits computing \((x_0 \otimes x_1) \oplus x_2\). The left one is not smooth and the right one is smooth.} \label{fig:ex1}
\end{figure}

In this paper we propose a more efficient smoothing algorithm. We focus on the commonly used class of \emph{structured decomposable circuits}, which include structured decomposable Negation Normal Form, Sentential Decision Diagrams, and more~\citep{Pipatsrisawat2008NewCL,Darwiche2011SDDAN}. Intuitively, structuredness requires that circuits always consider their variables in a certain way, which is formalized as a tree structure on the variables called a \emph{vtree}.

Our first contribution (Section~\ref{sec:smoothingcircuits}) is to show a near-linear time algorithm for smoothing such circuits, which is a clear improvement on the naive quadratic algorithm. Specifically, our algorithm runs in time proportional to the circuit size multiplied by the inverse Ackermann function $\alpha$ of the circuit size and number of variables\footnote{The inverse Ackermann function $\alpha$ is defined in~\cite{Tarjan1972EfficiencyOA}. As the Ackermann function grows faster than any primitive recursive function, the function \(\alpha\) grows slower than the inverse of any primitive recursive function, e.g., slower than any number of iterated logarithms of \(n\).} (Theorem~\ref{thm:smoothingub}).

Our second contribution (Section~\ref{sec:lowerbound}) is to show a lower bound of the same complexity, on smoothing decomposable circuits for the restricted class of smoothing algorithms that we call \emph{smoothing-gate algorithms} (Theorem~\ref{thm:smoothinglb}). Intuitively, smoothing-gate algorithms are those that retain the structure of the original circuit and can only make them smooth by adding new gates to cover the missing variables. This natural class corresponds to the example in Figure~\ref{fig:ex1} and our near-linear time smoothing algorithm also falls in this class. We match its complexity and show a lower bound on the performance of \emph{any} smoothing-gate algorithm, relying on known results in the field of range-sum queries.

Our third contribution (Section~\ref{sec:allmarginals}) is to focus on the probabilistic inference task of All-Marginals and to propose a novel linear time algorithm for this task which bypasses the need for smoothing, assuming that the weight function is always positive and supports all four elementary operations of \(\oplus,\ominus,\otimes,\oslash\) (Theorem~\ref{thm:allmarginal}). These results are summarized in Table~\ref{tab:summary}.

Our fourth contribution (Section~\ref{sec:structuredness}) is to study how to make a circuit smooth while preserving structuredness. We show that we cannot achieve a sub-quadratic smoothing algorithm if we impose the same vtree structure on the output circuit unless the vtree has low height (Prop.~\ref{prop:structuredbound}).

Our final contribution (Section~\ref{sec:experiments}) is to experiment on smoothing and probabilistic inference tasks. We evaluate the performance of our smoothing and of our linear time All-Marginals algorithm.

The rest of the paper is structured as follows. In Section~\ref{sec:background} we review the necessary definitions, and in Section~\ref{sec:smoothingrole} we motivate the task of smoothing in more detail. We then present each of our five contributions in order in Sections~\ref{sec:smoothingcircuits},~\ref{sec:lowerbound},~\ref{sec:allmarginals},~\ref{sec:structuredness} and~\ref{sec:experiments}. We conclude in Section~\ref{sec:conclusion}.

\begin{table}[ht]
\center
\caption{Summary of results on structured decomposable circuits. We let \(n\) be the number of variables and \(m\) be the size of the circuit. \label{tab:summary}}
\begin{tabular}{ccc}
\toprule
{\bfseries Task}
& {\bfseries Operations}
& {\bfseries Complexity}\\
\midrule
Smoothing & \(\oplus,\otimes\) & {\(O(m \cdot \alpha(m,n))\) }\\
Smoothing$^*$ & \(\oplus,\otimes\) & {\(\Omega(m \cdot \alpha(m,n)) ^*\) }\\
All-Marginal & \(\oplus,\ominus,\otimes,\oslash\) & {\(\Theta(m)\) }\\
\bottomrule
\multicolumn{3}{c}{$^*$ For \emph{smoothing-gate algorithms} on decomposable circuits.}
\end{tabular}
\end{table}
\section{Background} \label{sec:background}

Let us now define the model of circuits that we study (refer again to Figure~\ref{fig:ex1} for an example):

\begin{definition}
A \textbf{logical circuit} is a rooted directed acyclic graph where leaves are literals, and internal gates perform disjunction (\(\oplus\)-gates) or conjunction (\(\otimes\)-gates). An \textbf{arithmetic circuit} is one where leaves are numeric constants or variables, and internal gates perform addition (\(\oplus\)-gates) or multiplication (\(\otimes\)-gates). The \textbf{children} of an internal gate are the gates that feed into it.
\end{definition}

We focus on circuits that are \emph{decomposable} and more precisely that are \emph{structured}. We first define decomposability:

\begin{definition}
For any gate \(p\), we call \(vars_p\) the set of variables that appear at or below gate \(p\). A circuit is \textbf{decomposable} if these sets of variables are disjoint between the two children of every \(\otimes\)-gate. Formally, for every \(\otimes\)-gate \(p\) with children \(c_1\) and \(c_2\), we have \(vars_{c_1} \cap vars_{c_2} = \emptyset\).
\end{definition}

We then define structuredness, by introducing the notion of a \emph{vtree} on a set of variables:

\begin{definition}
A \textbf{vtree} on a set of variables \(S\) is a full binary tree whose leaves have a one-to-one correspondence with the variables in \(S\). We denote the set of variables under a vtree node \(p\) as \(u_p\).
\end{definition}

\begin{definition}
A circuit \textbf{respects} a vtree \(V\) if each of its \(\otimes\)-gate has 0 or 2 inputs, and there is a mapping \(\rho\) from its gates to \(V\) such that:
\begin{itemize}
    \item For every variable \(c\), the node \(\rho(c)\) is mapped to the leaf of \(V\) corresponding to \(c\).
    \item For every \(\oplus\)-gate \(c\) and child \(c'\) of \(c\), the node \(\rho(c')\) is \(\rho(c)\) or a descendant of \(\rho(c)\) in \(V\).
    \item For every \(\otimes\)-gate \(c\) with children \(c_1,c_2\), letting \(v_l\) and \(v_r\) be the left and right children of \(\rho(c)\), the node \(\rho(c_1)\) is \(v_l\) or a descendant of \(v_l\) and \(\rho(c_2)\) is \(v_r\) or a descendant of \(v_r\).
    \end{itemize}
    A circuit is \textbf{structured decomposable} if it respects some vtree \(V\). The circuit is then decomposable.
\end{definition}

Recall that a circuit can be preprocessed in linear time to ensure that each \(\otimes\)-gate has 0 or 2 inputs. 

Structured decomposability was introduced in the context of logical circuits, and it is also enforced in Sentential Decision Diagrams, a widely used tractable representation of Boolean functions~\citep{Darwiche2011SDDAN}. This property allows for a polytime conjoin operation and symmetric/group queries on logical circuits~\citep{Pipatsrisawat2008NewCL,Bekker2015TractableLF}. For circuits that represent distributions, structured decomposability allows multiplication of these distributions~\citep{ShenCD16}, efficient computation of the KL-divergence between two distributions~\citep{LiangXAI17}, and more. Structured decomposable circuits are also used when one wants to induce distributions over arbitrary logical formulae~\citep{Kisa2014ProbabilisticSD} or compile a logical formula bottom-up~\citep{oztok2015top}.

Next, we review another property of logical circuits that is relevant for probabilistic inference tasks~\citep{Darwiche2001OnTT,Choi2017OnRD}.

\begin{definition}
A logical circuit on variables \(\X\) is \textbf{deterministic} if under any input \(\x\), at most one child of each \(\oplus\)-gate evaluates to true.
\end{definition}

In the rest of this paper, we will let \(n\) denote the number of variables in a circuit and let \(m \geq n\) denote the size of a circuit, measured by the number of edges in the circuit.

\section{Smoothing} \label{sec:smoothingrole}

We focus on the probabilistic inference tasks of weighted model counting and computing All-Marginals~\citep{Sang2005PerformingBI,Chavira2008OnPI}. We will study weighted model counting in the more general form of \emph{Algebraic Model Counting} (AMC)~\citep{KimmigJAL16}. To describe these tasks, we define instantiations, knowledge bases and models.
\begin{definition}
Given a set of variables \(\X\), a full assignment of all the variables in \(\X\) is called an \textbf{instantiation}. A set \(f\) of instantiations is called a \textbf{knowledge base}, and each instantiation in \(f\) is called a \textbf{model}.
\end{definition}

The AMC task on a knowledge base \(f\) and a weight function \(w\) (a mapping from the literals to the reals) is to compute \(s\) from Equation~\ref{eq:amc}. The task of All-Marginals is to compute the partial derivative of \(s\) with respect to the weight of each literal as in Equation~\ref{eq:allmarginal}.

\vspace{-.3cm} 
\begin{minipage}{0.4\linewidth} 
    \center
    \begin{equation} 
        \textstyle s = \bigoplus_{\x \in f}{\bigotimes_{x \in \x} {w(x)}} \quad \text{AMC} \label{eq:amc}
    \end{equation} 
\end{minipage} 
\hspace{0.08\linewidth}
\begin{minipage}{0.5\linewidth} 
    \center
    \begin{equation} 
        \textstyle \left\{\frac{\partial s}{\partial w(x)}, \frac{\partial s}{\partial w(-x)} \, \middle| \, X \in \X \right\} \quad \text{All-Marginals} \label{eq:allmarginal}
    \end{equation} 
\end{minipage}

On probabilistic models, $s$ is often the partition function or the probability of evidence, where the partial derivatives of these quantities correspond to all (conditional) marginals in the distribution. Computing All-Marginals efficiently significantly speeds up probabilistic inference, and is used as a subroutine in the collapsed sampling algorithm in our later experiments.

These tasks are difficult in general, unless we have a tractable representation of the knowledge base~\(f\). 
Moreover, it is important to have a smooth representation.
Indeed, suppose \(f\) is represented as a logical circuit that is only deterministic and decomposable but not smooth. Then, there is in general no known technique to perform the AMC and All-Marginals tasks in linear time (although there is a special case where AMC can be performed in linear time, explained below). By contrast, if \(f\) is represented as a logical circuit that is deterministic, decomposable and smooth, then the AMC and All-Marginals tasks can be performed in time \(O(m)\). For example, the AMC task is done by converting the deterministic, decomposable and smooth logical circuit into an arithmetic circuit, attaching the weights of the variables as numeric constants in the circuit, and then evaluating the circuit. Furthermore, when a decomposable arithmetic circuit computes a factor (a mapping from instantiations to the reals), enforcing smoothness allows it to compute factor marginals in linear time~\citep{Choi2017OnRD}.

As smoothing is necessary to efficiently solve these inference tasks, we are interested in studying the complexity of smoothing a circuit. To do so, we formally define the task of smoothing.

\begin{definition}
Two logical circuits on variables \(\X\) are \textbf{equivalent} if they evaluate to the same output on any input \(\x\).
\end{definition}
\begin{definition}
A circuit is \textbf{smooth} if for every pair of children \(c_1\) and \(c_2\) of a \(\oplus\)-gate, \(vars_{c_1} = vars_{c_2}\).
\end{definition}
\begin{definition}
The task of \textbf{smoothing} a decomposable logical circuit is to output a smooth and decomposable logical circuit that is equivalent to the input circuit. Similarly, the task of \textbf{smoothing} a deterministic and decomposable logical circuit is to output a smooth, deterministic, and decomposable circuit that is equivalent to the input circuit.
\end{definition}

We only define the smoothing task over logical circuits. This is because the probabilistic inference tasks are performed by smoothing a logical circuit and then converting it into an arithmetic circuit, so it is easier for the reader to only consider smoothing on logical circuits. For the rest of the paper, we will refer to logical circuits simply as circuits.
Note that we require the output smooth circuit to preserve the same properties (decomposability/determinism) as the input circuit. Indeed, there is a trivial linear time algorithm for smoothing that breaks decomposability (i.e., simply conjoin all gates with a tautological gate that mentions all variables), but then the resulting circuit may not be useful for probabilistic inference. 
Again, we need decomposability to compute factor marginals, and we need decomposability along with determinism to compute AMC and All-Marginals.
By contrast, we do not require the output smooth circuit to be structured, because structuredness is not required to solve our tasks of AMC or All-Marginals (nevertheless, we do study structuredness in Section~\ref{sec:structuredness}).

Sometimes, when the weight function allows division, there exists a renormalization technique that can solve AMC in linear time without smoothing the initial circuit~\citep{KimmigJAL16}. However, this restriction is limiting, since even if the weight function is defined over a field, division by zero may be unavoidable~\citep{Broeck2014SkolemizationFW}. Also, the weight function may only be defined over a semiring (\(\oplus, \otimes\))~\citep{Friesen2016TheST}. In these cases, there is no known technique to bypass smoothing. Therefore, developing an efficient smoothing algorithm is an important problem, which we address next in Sections~\ref{sec:smoothingcircuits} \&~\ref{sec:lowerbound}.

On the other hand, one may still be interested in settings where all four elementary operations of \(\oplus,\ominus,\otimes,\oslash\) on the weight function are allowed. To this end, we also propose in Section~\ref{sec:allmarginals} a novel technique that solves the All-Marginals task in linear time when the weight function is positive, and when subtraction and division are allowed.

\section{Smoothing Algorithm} \label{sec:smoothingcircuits}

We present our algorithm for smoothing structured decomposable circuits, based on the semigroup range-sum literature. First, we define a class of common strategies to smooth a circuit, which encompasses both the previously-known algorithm and our new algorithm.

 The existing quadratic algorithm for smoothing a circuit goes to each \(\oplus\)-gate and inserts missing variables one by one~\citep{Darwiche2001OnTT}. This algorithm retains the original gates of the circuit, and adds additional gates to fill in missing variables. We will define \emph{smoothing-gate algorithms} as the family of smoothing algorithms that retain the original gates of the circuit.

\begin{definition}
\textbf{Edge contraction} is the process of removing each \(\oplus\)-gate or \(\otimes\)-gate with a single child, and feeding the child as input to each parent of the removed gate.
\end{definition}

\begin{definition}
A \textbf{subcircuit} of a circuit is another circuit formed by taking a subset of the gates and edges of the circuit, and picking a new root. The gate subset must include the new root and all endpoints of the edge subset.
\end{definition}

\begin{definition}
\label{def:smoothinggatealg}
Two circuits \(g\) and \(h\) with gate sets \(G\) and \(H\) are \textbf{isomorphic} if there exists a bijection \(B : G \rightarrow H\) between their gates such that the following conditions hold.
\begin{enumerate}
    \item For any gate \(p \in G\), \(B(p)\) is the same type of gate as \(p\). 
    \item For any gate \(p_1 \in G\) and child \(p_2 \in G\) of \(p_1\), the gate \(B(p_2)\) is a child of \(B(p_1)\) in \(h\).
    \item For any gate \(p_1' \in H\) and child \(p_2' \in H\) of \(p_1'\), the gate \(B^{-1}(p_2')\) is a child of \(B^{-1}(p_1')\) in \(g\).
    \item The root of \(g\) maps to the root of \(h\).
\end{enumerate}

An algorithm is a \textbf{smoothing-gate algorithm} if for any edge-contracted (deterministic and) decomposable input circuit~\(g\), the output circuit is smooth and (deterministic and)  decomposable, is equivalent to \(g\), and has a subcircuit that is isomorphic to \(g\) after edge contraction.
\end{definition}

\emph{Smoothing-gate algorithms} are intuitive, since the structure of the original circuit is preserved. This includes the quadratic algorithm, as well as algorithms which identify missing variables under each gate and attach tautological gates to fill in those missing variables, as was done in Figure~\ref{fig:ex1}. Formally:

\begin{definition} \label{def:smoothinggate}
A gate \(g\) is called a \textbf{smoothing gate} for a set of variables \(\X\) if \(vars_g = \X\) and the circuit rooted at \(g\) is tautological and decomposable. We denote such a gate by \(SG(\X)\).
\end{definition}

The structure of a smoothing gate \(SG(\X)\) is not specified. The only requirement is that it mentions all variables in \(\X\) and is tautological and decomposable. For example, the quadratic algorithm constructs each \(SG(\X)\) by naively conjoining \(x \oplus -x\) for each variable in \(\X\) one at a time, leading to a linear amount of work per gate. In the case of structured decomposable circuits, we can do much better.

\begin{lemma} \label{lem:intervalgap}
Consider a structured decomposable circuit, and let \(\pi\) be the sequence of its variables written following the in-order traversal of its vtree. For any two vtree nodes \((\rho(p), \rho(c))\), we have that \(u_{\rho(p)} \backslash u_{\rho(c)}\) can be written as the union of at most two intervals in \(\pi\).
\end{lemma}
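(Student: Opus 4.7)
The plan is to argue in two short steps, relying on standard properties of in-order traversals of binary trees.

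First, I would establish the following preliminary observation: for every vtree node $v$, the set $u_v$ of variables under $v$ forms a contiguous interval in $\pi$. This is an immediate property of in-order traversal on a full binary tree: when the traversal enters $v$, it visits exactly the leaves below $v$ consecutively (the left subtree's leaves, then any leaf at $v$ which cannot happen here since $v$ is internal, then the right subtree's leaves), and then leaves $v$ and does not return. So the positions of $u_v$ in $\pi$ form one contiguous block. I would prove this by a quick induction on the height of $v$ in the vtree.

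Next, I would use the hypothesis that $\rho(c)$ is either $\rho(p)$ itself or a descendant of $\rho(p)$ in the vtree. This is exactly the invariant guaranteed by the definition of a circuit respecting a vtree (for both $\oplus$-gates and $\otimes$-gates), so it is available to me for any parent/child pair $(p, c)$ in the circuit. In particular, every leaf below $\rho(c)$ is also below $\rho(p)$, giving the inclusion $u_{\rho(c)} \subseteq u_{\rho(p)}$.

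Finally I would combine the two facts. By the preliminary observation, both $u_{\rho(p)}$ and $u_{\rho(c)}$ are intervals of $\pi$; by the inclusion just noted, $u_{\rho(c)}$ is a sub-interval of $u_{\rho(p)}$. The set difference $u_{\rho(p)} \setminus u_{\rho(c)}$ therefore consists of the portion of $u_{\rho(p)}$ to the left of $u_{\rho(c)}$ together with the portion to the right of $u_{\rho(c)}$, each of which is itself an interval (possibly empty). Hence at most two intervals in $\pi$ are needed to express $u_{\rho(p)} \setminus u_{\rho(c)}$, which is the claim.

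There is no real obstacle here: the entire content of the lemma is that vtree subtrees project to contiguous ranges under in-order traversal, and that removing a sub-interval from an interval leaves at most two pieces. The only thing to be careful about is to cite the right clause of the ``respects a vtree'' definition so that the descendant relation $\rho(c) \preceq \rho(p)$ is in hand for all child relationships (both under $\oplus$-gates and under $\otimes$-gates), rather than only one kind.
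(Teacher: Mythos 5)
Your proof is correct and follows the same approach as the paper's: observe that each vtree subtree projects to a contiguous interval of $\pi$ under in-order traversal, then note that the set difference of two intervals is at most two intervals. The only addition you make is to explicitly invoke the descendant relation to establish $u_{\rho(c)} \subseteq u_{\rho(p)}$; this is a nice clarification but is not strictly needed, since removing any interval from any other interval already yields at most two pieces, which is all the paper's one-line argument relies on.
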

\begin{proof}
Since \(v\) is a binary tree, the in-order traversal of \(v\) visits the variables of \(u_{\rho(p)}\) consecutively, and the variables of \(u_{\rho(c)}\) consecutively. Hence, \(u_{\rho(p)}\) and \(u_{\rho(c)}\) can each be written as one interval, and \(u_{\rho(p)} \backslash u_{\rho(c)}\) can be written as the union of at most two intervals.
\end{proof}

We then smooth a circuit in one bottom-up pass. If \(p\) is a leaf \(\otimes\)-gate, replace it with \(SG(u_{\rho(p)})\). If \(p\) is an internal \(\otimes\)-gate, letting \(v_l,v_r\) and \(c_1,c_2\) be the children of \(\rho(p)\) and \(p\) respectively, replace \(c_1\) with \(c_1 \otimes SG(u_{v_l} \backslash u_{\rho(c_1)})\) and \(c_2\) with \(c_2 \otimes SG(u_{v_r} \backslash u_{\rho(c_2)})\). If \(p\) is a \(\oplus\)-gate, replace each child \(c\) with \(c \otimes SG(u_{\rho(p)} \backslash u_{\rho(c)})\).  By Lemma~\ref{lem:intervalgap}, each smoothing gate can be built by multiplying together two gates of the form \(\bigotimes_{\X}(x \oplus -x)\), where \(\X\) forms an interval in \(\pi\). Thus, we can appeal to results from semigroup range-sums, by treating each \(x \oplus -x\) as an element in a semigroup, and treating the computation of \(\bigotimes_{\X}(x \oplus -x)\) as a ``summation'' in the semigroup over an interval (range).

\paragraph*{Semigroup Range-Sum.} \label{sec:rangesum}

The semigroup range-sum problem considers a sequence of \(n\) variables $x_1, \ldots, x_n$, a sequence of \(m \geq n\) intervals \([a_1,b_1],\ldots,[a_m,b_m]\) of these variables, and a weight function \(w\) from the variables to a semigroup. The task is to compute the sum of weights of the variables in each interval, i.e. \(s_j = \Sigma_{i \in [a_j,b_j]} w(x_i)\) for all \(j \in [1,m]\)~\citep{Yao1982SpaceTimeTF,Chazelle1989ComputingPS}. Since \(w\) is only defined over a semigroup, subtraction is not supported. That is, we cannot follow the efficient strategy of precomputing all \(p_k = \Sigma_{i \in [1,k]} w(x_i)\) and outputting \(s_j = p_{b_j} - p_{a_j-1}\). Still, there is an efficient algorithm to compute all the required sums in time \(O(m \cdot \alpha(m,n))\), where \(\alpha\) is the inverse Ackermann function. We restate their result here.

\begin{theorem} \label{thm:rangesumub}
Given \(n\) variables defined over a semigroup and \(m\) intervals, the sum of all intervals can be computed using \(O(m \cdot \alpha(m,n))\) additions~\citep{Chazelle1989ComputingPS}.
\end{theorem}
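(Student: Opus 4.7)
The plan is to follow the classical approach of Chazelle and Rosenberg, since the theorem is stated as a known result. The central difficulty is that we work in a semigroup, so the prefix-sum trick $s_j = p_{b_j} \ominus p_{a_j - 1}$ is unavailable; we must compute each interval by combining precomputed sub-sums that together cover it exactly, and we want every variable to be charged $O(\alpha(m,n))$ times on average across the $m$ queries.

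First, I would build a hierarchical data structure over the index range $[1,n]$ with a sequence of nested partitions indexed by levels $0, 1, \dots, \ell$ where $\ell = O(\alpha(m,n))$. The block sizes $b_0 < b_1 < \cdots < b_\ell$ grow in an Ackermann-like fashion (each $b_{i+1}$ obtained from $b_i$ by a rapidly accelerating recurrence), so that by level $\ell$ the blocks cover the whole array. Within each level, and guided by the set of $m$ queries that we know offline, I would precompute a carefully chosen collection of partial sums: prefix sums from block boundaries, suffix sums to block boundaries, and aggregated sums over consecutive blocks. The precomputation at level $i$ reuses the sums computed at level $i-1$, so each new partial sum at level $i$ costs only $O(1)$ semigroup additions amortized, and the total precomputation across all $\ell$ levels is $O(m \cdot \alpha(m,n))$ additions.

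Next, I would show that any query interval $[a_j,b_j]$ can be answered using $O(\alpha(m,n))$ precomputed pieces. The idea is to find the coarsest level $i$ at which $[a_j,b_j]$ fully contains at least one level-$i$ block; the inner portion is then a concatenation of $O(1)$ precomputed aggregated block sums, and the two boundary fragments (on the left and right of the inner portion, each strictly smaller than $b_i$) are handled recursively at levels $i-1, i-2, \dots$. Since each level contributes $O(1)$ additions and there are $\ell = O(\alpha(m,n))$ levels, each query is answered in $O(\alpha(m,n))$ additions, giving $O(m \cdot \alpha(m,n))$ additions in total.

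The hard part is the amortized accounting: one must verify both that the precomputation does not blow up past $O(m \cdot \alpha(m,n))$ and that the block-size schedule is fast-growing enough to guarantee only $\alpha(m,n)$ levels, yet slow-growing enough that each level can be built incrementally from the previous. This balance is the technical core of \citep{Chazelle1989ComputingPS}, where the hierarchy is tuned to the Ackermann recurrence precisely to make both bounds simultaneously tight. Because the theorem is quoted verbatim from that work, I would defer the detailed amortized analysis to the original paper and use only the statement in our application.
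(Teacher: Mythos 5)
The theorem you are asked to prove is stated in the paper as a quoted result of Chazelle and Rosenberg; the paper does not prove it, and your choice to sketch the argument and defer the detailed analysis to the original work is exactly what the paper does. So at the meta-level your approach matches.

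That said, your sketch misdescribes the mechanism of the Chazelle--Rosenberg scheme in two respects, and it is worth knowing this because the paper's appendix does restate the construction (when it extends the result to show the sequence of additions can also be \emph{found} in $O(m\cdot\alpha(m,n))$ time, in the proof of Theorem~\ref{thm:smoothingub}). First, the construction is not a single linear hierarchy of $\ell = O(\alpha(m,n))$ levels with block sizes growing Ackermann-fast and each query touching $O(1)$ pieces per level. It is a two-parameter recursion $R(t,k)$, where $k$ controls the preprocessing budget ($kn$ additions) and $t$ the per-query budget. The $n$ positions are split into $b$ blocks of size $a = R(t,k-3)$; each block gets explicit elementwise prefix and suffix sums; each block is recursively preprocessed with parameter $(t,k-3)$ (an \emph{inner} recursion), and the $b$ block-totals are recursively preprocessed with parameter $(t-2,a)$ (an \emph{outer} recursion). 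A query that falls inside a single block descends into the inner recursion at no cost at all; a query that straddles blocks pays exactly two additions to glue on a precomputed prefix and suffix, then recurses outward with budget $t-2$. The reason the total is $O(m\cdot\alpha(m,n))$ is that $R(t,k)$ grows like the Ackermann function in its arguments and the chain of outer recursions has depth at most $(t-1)/2$; it is not that there are literally $\alpha(m,n)$ hierarchy levels with one addition each. Second, the preprocessing is query-independent: Algorithm~\ref{alg:semigrouprangesum} calls \texttt{preprocess} knowing only $n$, $t$, and $k$ (which are set from $\alpha(m,n)$), not the particular set of intervals. Your phrase ``guided by the set of $m$ queries that we know offline'' does not describe this construction; $m$ enters only through the choice of the parameter $c = \alpha(m,n)$.

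Because you explicitly cite and defer, these inaccuracies do not constitute a gap in what you are using. But if you ever need to reason about the internals of the scheme --- as the paper's own appendix must, to bound the bookkeeping cost of selecting which additions to perform --- the flat-hierarchy picture you sketched is the wrong mental model, and the two-parameter $(t,k)$ recursion with cost-free inner descents and cost-2 outer ascents is the one to have in mind.
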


Our smoothing task can be reduced to the semigroup range-sum problem as follows. Smoothing a structured decomposable circuit of size \(m\) reduces to constructing smoothing gates for \(O(m)\) intervals. We pass these intervals as input to the range-sums algorithm, which will then generate a sequence of additions that computes the sum of each interval: each addition adds two numbers that are either individual variable weights or a sum that was previously computed.

We then trace this sequence of additions (see Figure~\ref{fig:algtrace}). For the base case of \(w(x_i)\), let \(g(w(x_i))\) be the gate \(x_i \oplus -x_i\). Then for each addition \(s = t + u\), we construct a corresponding \(\otimes\)-gate \(g(s) = g(t) \otimes g(u)\). In particular, when an addition in the sequence has computed the sum of an interval, then the corresponding gate is a smoothing gate for that interval. This process preserves determinism, so it converts a (deterministic and) structured decomposable circuit into a smooth and (deterministic and) decomposable circuit. The output circuit is generally no longer structured.

\begin{theorem} \label{thm:smoothingub}
The task of smoothing a (deterministic and) structured decomposable circuit has time complexity \(O(m \cdot \alpha(m,n))\), where \(n\) is the number of variables and \(m\) is the size of the circuit.
\end{theorem}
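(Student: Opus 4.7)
The plan is to assemble the pieces already laid out in the excerpt into a complete complexity argument. First, I would perform one bottom-up traversal of the input circuit and, for each gate $p$, enumerate the sets of missing variables that need to be covered: for each $\oplus$-gate $p$ with children $c$, the set $u_{\rho(p)} \setminus u_{\rho(c)}$, and analogously for $\otimes$-gates the two sets corresponding to the left and right vtree children. The number of such sets is bounded by the number of edges in the circuit, so at most $O(m)$. By Lemma~\ref{lem:intervalgap}, each such set can be written as a union of at most two intervals in the in-order enumeration $\pi$ of variables, giving a collection of $O(m)$ intervals over the $n$ variables.

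Next I would instantiate the semigroup: the ground elements are the tautological gates $x_i \oplus -x_i$, and the semigroup operation is $\otimes$. The reason this is only a semigroup and not a group is that circuit gates cannot be removed/divided once assembled, which matches exactly the setting of Chazelle and Rosenberg. I would feed the $O(m)$ intervals to the range-sum procedure of Theorem~\ref{thm:rangesumub}, obtaining a schedule of $O(m \cdot \alpha(m,n))$ additions, each combining two previously-built sub-sums (or ground elements) into a new one. Tracing the schedule, for each semigroup addition $s = t + u$ I would materialize a new $\otimes$-gate $g(s) = g(t) \otimes g(u)$ in the output circuit, reusing the gates $g(t)$ and $g(u)$ that have already been constructed. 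This builds each required $SG(\X)$ using only $O(m \cdot \alpha(m,n))$ new gates in total.

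Finally I would splice these smoothing gates into the edge-contracted copy of the input circuit following the smoothing-gate recipe given just before the theorem: wrap each child of a $\oplus$- or $\otimes$-gate with the conjunction of the child and the appropriate $SG$ of the missing variables. Correctness has three components to check: (i) \emph{equivalence}, since each $SG(\X)$ is tautological and is $\otimes$-attached, the semantics of every gate is unchanged; (ii) \emph{decomposability}, which holds because each $SG(\X)$ is built by decomposable $\otimes$-gates over disjoint intervals and is attached only at points where its variables are disjoint from the original subcircuit's variables, by construction of the missing sets; (iii) \emph{smoothness}, which follows immediately because after the splice, every child of every $\oplus$-gate spans exactly $u_{\rho(p)}$. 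Determinism is preserved because adding tautologies by $\otimes$ to inputs of a $\oplus$-gate does not introduce new satisfying assignments that overlap.

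The step I expect to require the most care is bookkeeping in the reduction to semigroup range-sums. The subtlety is that Lemma~\ref{lem:intervalgap} gives unions of up to two intervals rather than single intervals, so I must either hand the range-sum procedure two requests per smoothing gate (still $O(m)$ intervals) and then multiply their outputs together with an extra $\otimes$-gate, or handle the degenerate case where one of the intervals is empty. I also need to verify that the additive overhead for the bottom-up enumeration and the final circuit patching is linear, so that the $O(m \cdot \alpha(m,n))$ bound from Theorem~\ref{thm:rangesumub} dominates and yields the claimed complexity.
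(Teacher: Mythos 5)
Your high-level plan matches the paper's approach exactly: enumerate missing-variable sets, convert them to $O(m)$ intervals via Lemma~\ref{lem:intervalgap}, cast the problem as semigroup range-sums over the semigroup generated by $x_i\oplus -x_i$ under $\otimes$, trace the resulting addition schedule into $\otimes$-gates, and splice those smoothing gates back into the circuit in a bottom-up pass. Your correctness checks (equivalence, decomposability, smoothness, determinism) are all the right ones and are handled correctly.

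However, there is a genuine gap where you treat Theorem~\ref{thm:rangesumub} as a black box that hands you the schedule in $O(m\cdot\alpha(m,n))$ \emph{time}. That theorem only bounds the \emph{number of semigroup additions}; Chazelle and Rosenberg prove the existence of a short addition sequence, but they do not assert a bound on the time needed to \emph{discover} which additions to perform. The paper explicitly flags this: the entire appendix proof of Theorem~\ref{thm:smoothingub} is devoted to showing that the scheduling overhead is $O(1)$ per addition. The difficulty is concentrated in the ``inner recursion'' steps of the Chazelle--Rosenberg scheme: an inner recursion descends a level without performing any addition, so repeated inner recursions would cost unbounded navigation time against zero charged additions. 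The paper's fix is the \emph{jump technique} --- precomputing, for each outer-recursion level $t$ and each interval size $s$, the deepest inner-recursion level whose block size does not exceed $s$, so that all consecutive inner recursions are replaced by a single $O(1)$-time lookup that is charged to the addition performed immediately afterward --- together with a padding argument to handle ragged last blocks. Your final paragraph gestures at needing to ``verify that the additive overhead \ldots\ is linear,'' but it attributes that overhead to the enumeration and splicing phases, which are indeed easy; the real overhead is inside the range-sum procedure itself, and your proposal does not address it. Without that argument you only get a bound on the \emph{size} of the output circuit, not on the \emph{time} to construct it, which is what the theorem claims.
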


Although~\cite{Chazelle1989ComputingPS} do not formally assert a time complexity on determining the sequence of additions to perform, we show that there is no overhead to this step. That is,~\cite{Chazelle1989ComputingPS} show that there exists a sequence of \(O(m \cdot \alpha(m,n))\) additions, and we additionally prove that this sequence can be computed in time \(O(m \cdot \alpha(m,n))\). The proof is in the Appendix.

\begin{figure}[t]
    \begin{subfigure}[b]{0.55\linewidth}
        \begin{verbatim}
            a = w(x_2) + w(x_3)
            b = a + w(x_1)
            c = a + w(x_4)
            output b, c\end{verbatim}
        \caption{Sequence of additions to compute intervals} \label{fig:intervals}
    \end{subfigure}
    \begin{subfigure}[b]{0.38\linewidth}
        \centering
        \includegraphics[width=\linewidth]{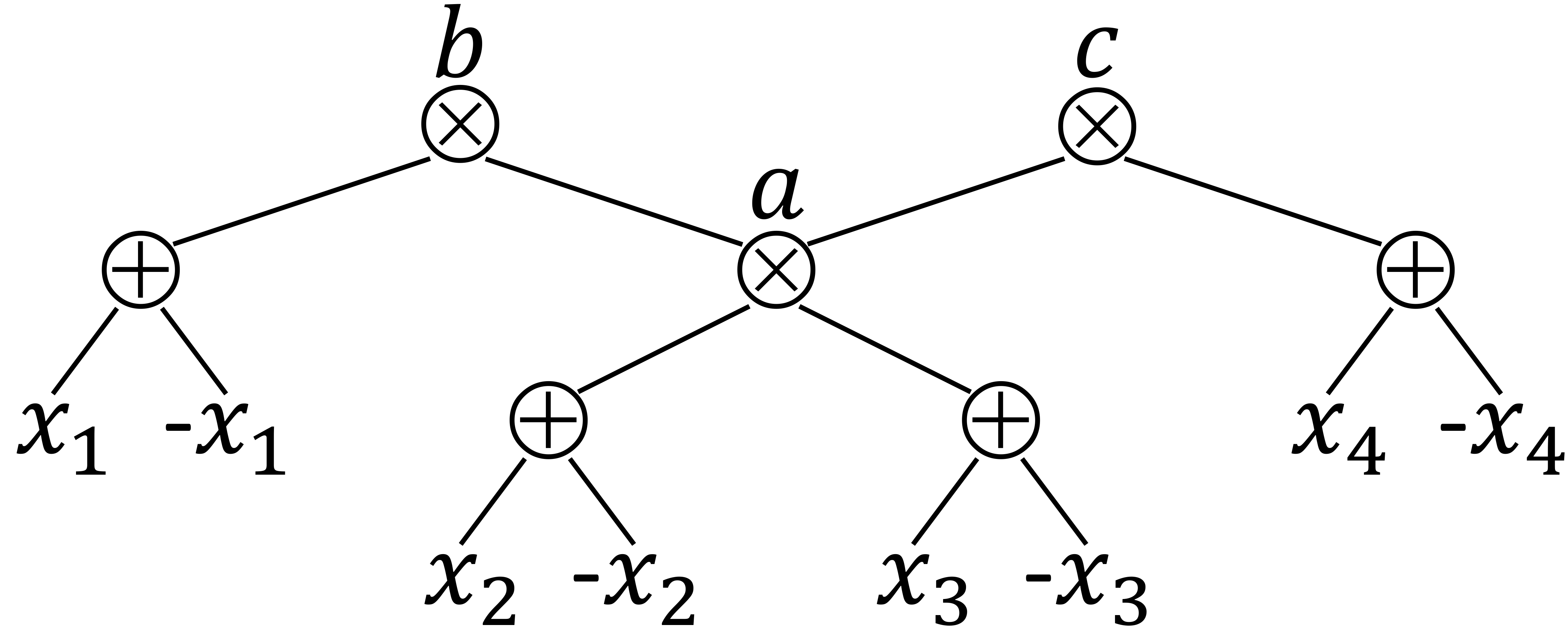}
        \caption{Tracing the additions into a circuit} \label{fig:trace}
    \end{subfigure}
\caption{We construct smoothing gates for \(\{x_1,x_2,x_3\}\) and \(\{x_2,x_3,x_4\}\) by first passing the intervals \([1,3]\) and \([2,4]\) to the range-sum algorithm, and then tracing the sequence of additions. The trace is done by replacing \(w(x_i)\) with \(x_i \oplus -x_i\) and replacing each addition with a \(\otimes\)-gate.} \label{fig:algtrace}
\end{figure}

\section{Lower Bound} \label{sec:lowerbound}

In this section we show a lower bound on the task of smoothing a decomposable circuit, for the family of smoothing-gate algorithms. First we state an existing lower bound on semigroup range-sums:

\begin{theorem} \label{thm:rangesumlb}
Given \(n\) variables defined over a semigroup, for all algorithms there exists a set of \(m=n\) intervals such that computing the sum of the weights of the variables for each interval takes \(\Omega(m \cdot \alpha(m,n))\) number of additions~\citep{Chazelle1989ComputingPS}.
\end{theorem}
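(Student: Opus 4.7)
Since Theorem~\ref{thm:rangesumlb} is attributed to Chazelle and Rosenberg, my plan is to sketch the standard route to this bound rather than re-derive it in full. The approach has three stages, which I would organize as a reduction from semigroup algorithms to a combinatorial subset-cover problem, followed by a recursive adversarial construction that forces the inverse Ackermann blowup.

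First I would fix a ``free'' commutative semigroup whose only identities are those implied by associativity and commutativity. In such a semigroup, any algorithm that uses only the semigroup addition can be represented as a straight-line program, i.e., a DAG whose sources are the inputs \(x_1,\ldots,x_n\), whose internal nodes each correspond to a single addition of two predecessors, and whose designated outputs correspond to the queried intervals. Each node \(v\) then carries a ``support'' \(S(v) \subseteq [n]\) equal to the multiset of input indices whose weights it sums. Freeness forces \(S(v) = S(u) \cup S(w)\) at every addition node combining \(u\) and \(w\), and correctness forces the output node for interval \([a_j,b_j]\) to have support exactly \(\{a_j,\ldots,b_j\}\). The number of additions is then the number of internal nodes of the DAG, which is exactly what we want to lower-bound, so the question reduces to a purely combinatorial one: what is the minimum size of a DAG that realizes a prescribed family of subsets as root supports, given that each internal node's support is the union of its children's?

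Second, I would exhibit an adversarial family of \(m = n\) intervals for which any such DAG must have \(\Omega(n\,\alpha(n))\) internal nodes. The classical construction is recursive and Ackermann-like: one builds nested families of intervals at increasing levels \(k\), and at each level \(k\) the combinatorial structure forces an \(A_k\)-type blowup in the additions required per reused partial sum. The matching analysis is typically carried out by a potential-function or charging argument in the spirit of Tarjan's analysis of union-find with path compression, or alternatively by relating the DAG structure to Davenport--Schinzel sequence complexity.

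The main obstacle, and the reason the theorem is nontrivial, is the second stage: the reduction to subset-cover DAGs in the first stage is essentially an immediate consequence of freeness, but proving that no DAG for the adversarial intervals can be asymptotically more efficient than iterated ``doubling'' is where the intricate combinatorics lives, and where \(\alpha(m,n)\) enters. Rather than reproducing this machinery, I would invoke the Chazelle--Rosenberg bound as a black box, since it is precisely this form that feeds into the impending lower bound on smoothing-gate algorithms (Theorem~\ref{thm:smoothinglb}) via a reduction in the opposite direction to that of Section~\ref{sec:smoothingcircuits}.
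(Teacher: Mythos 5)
The paper offers no proof of Theorem~\ref{thm:rangesumlb}: it is stated purely as a black-box citation of Chazelle and Rosenberg (1989), and you ultimately invoke it the same way. Your accompanying sketch (free commutative semigroup, straight-line-program DAG whose node supports are forced to be unions of their children's, adversarial interval family, inverse-Ackermann charging in the style of Tarjan's union--find analysis) is a faithful high-level account of the cited argument, but since the paper contains no proof of its own for this statement, the right assessment is simply that both treatments defer to the reference.
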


We cannot immediately assert the same lower bound for the problem of smoothing decomposable circuits, for two reasons. First, we must reduce the computation of the \(m\) interval sums to a smoothing problem, and express this reduction in a circuit taking no more than \(O(m)\) space. Second, we must show that no smoothing algorithm is more efficient than smoothing-gate algorithms. We address the first issue but leave the second open, leading to the following theorem with the proof in the Appendix.

\begin{theorem} \label{thm:smoothinglb}
For smoothing-gate algorithms, the task of smoothing a decomposable circuit has space complexity \(\Omega(m \cdot \alpha(m,n))\), where \(n\) is the number of variables and \(m\) is the size of the circuit.
\end{theorem}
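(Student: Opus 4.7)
The plan is to reduce the semigroup range-sum lower bound (Theorem~\ref{thm:rangesumlb}) to the smoothing task. Starting from an instance with \(n\) variables and \(m\) intervals \([a_1,b_1], \ldots, [a_m,b_m]\), I will build a decomposable circuit \(C\) of size \(O(m+n)\) and argue that any smoothing-gate algorithm applied to \(C\) must produce an output whose number of \(\otimes\)-gates is at least the number of additions required to compute those interval sums in a suitable semigroup.

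For the construction, with variables \(x_1,\ldots,x_n\), I form a prefix chain \(P_k = x_k \otimes P_{k-1}\) and a suffix chain \(S_k = x_k \otimes S_{k+1}\), using \(O(n)\) gates total. For each interval \(j\), I introduce an exterior gate \(E_j = P_{a_j-1} \otimes S_{b_j+1}\) (endpoints handled trivially), so \(vars_{E_j} = \{x_1,\ldots,x_n\} \setminus [a_j, b_j]\), together with an \(\oplus\)-gate \(p_j\) whose two children are \(P_n\) and \(E_j\). The root is an \(\oplus\)-gate over all \(p_j\); it is smooth since every \(p_j\) has \(vars = \{x_1,\ldots,x_n\}\). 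The circuit is decomposable, has size \(O(m+n)\), and each \(p_j\) is non-smooth because its two children disagree on exactly the interval \([a_j, b_j]\).

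Next, I argue that to smooth each \(p_j\), a smoothing-gate algorithm must attach new \(\otimes\)-children to \(E_j\) whose combined \(vars\) equals \([a_j, b_j]\) and whose joint value is a tautology on \([a_j, b_j]\). The preserved-subcircuit condition in Definition~\ref{def:smoothinggatealg} forbids editing the existing parent-child relationships at \(p_j\); decomposability confines the \(vars\) of any new child of \(E_j\) to \([a_j, b_j]\); smoothness of the output demands that these new children together cover \([a_j, b_j]\); and equivalence of the overall circuit forces their joint function on \([a_j, b_j]\) to be tautological, since the other conjuncts of \(E_j\) depend only on the complementary variables. Editing the shared gates \(P_k\) or \(S_k\) is ruled out: any additional child of \(P_k\) would change \(vars_{P_k}\) and break decomposability at its parent in the chain.

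Finally, in the smoothed output, leaves are literals with singleton \(vars\), smooth \(\oplus\)-gates preserve \(vars\), and decomposable \(\otimes\)-gates combine disjoint \(vars\) sets. Since no gate in \(C\) is tautological, the new smoothing gates cannot reuse subcircuit gates as shortcuts and must be assembled from atoms \(x_i \oplus -x_i\) through \(\otimes\)-gates. Producing the \(m\) required smoothing gates is therefore equivalent to computing \(m\) interval sums in the free commutative disjoint-union semigroup on \(n\) generators, and Theorem~\ref{thm:rangesumlb} applied to this semigroup yields the claimed \(\Omega(m \cdot \alpha(m,n))\) lower bound on \(\otimes\)-gates, hence on the circuit size. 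The main obstacle I anticipate is rigorously excluding exotic shortcuts using subcircuit gates---showing that no combination of non-tautological gates from \(C\) with added gates can produce a tautological decomposable circuit with \(vars = [a_j, b_j]\) more cheaply than building one from scratch---and verifying that the disjoint-union semigroup (cancellation-free and embeddable in \((\mathbb{N}^n, +)\)) satisfies the faithfulness hypothesis needed for the Chazelle--Rosenberg bound.
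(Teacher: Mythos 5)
Your reduction uses the same blueprint as the paper's proof: prefix and suffix chains, one gate per interval that omits exactly the variables of that interval, all attached to a common \(\oplus\)-root so that a smoothing-gate algorithm is forced to fill in each interval. Your circuit differs in small ways from the paper's (an extra layer of \(\oplus\)-gates \(p_j\), and you attach only \(P_n\) rather than both \(p_n\) and \(s_1\) to keep the full chains in the circuit), but these are cosmetic.

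The step you flag as the ``main obstacle'' is the genuine gap, and your tentative resolution would not go through as stated. You argue that each \(E_j\) must be conjoined with a tautology on \([a_j,b_j]\) built from fresh \(\otimes\)-gates over atoms \(x_i \oplus -x_i\), and that building these is ``equivalent'' to computing interval sums. But Definition~\ref{def:smoothinggatealg} only guarantees a subcircuit isomorphic to \(g\) after edge contraction: the output may contain \(\oplus\)-gates inside the smoothing gadgets, may route through the preserved chain gates, and may interleave new and old gates in ways your sketch does not rule out. The paper closes this with three steps your proposal omits. First, it analyzes the \emph{skeleton graph} \(S\) (the image of \(g\) under the bijection \(B\) before edge contraction) and shows that gates on paths from \(B(g)\) to \(B(G_i)\) and from \(B(G_i)\) into the chain gates each have exactly one child in \(S\), so extraneous children of preserved \(\oplus\)-gates can be pruned while maintaining smoothness and decomposability. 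Second, decomposability alone forces \(B(p_k)\) to mention only variables in \([1,k]\) and \(B(s_k)\) only variables in \([k,n]\); once the edges from the gadget into the chains are cut, the unique child \(G'_i\) of \(B(g)\) ancestral to \(B(G_i)\) mentions exactly \([a_i,b_i]\), with no appeal to tautology needed. Third, a relabeling trick (reroute one input wire of every remaining \(\oplus\)-gate to all its parents) collapses each \(G'_i\) into a pure \(\otimes\)-DAG with one literal per variable in \([a_i,b_i]\), which reads off directly as a sequence of semigroup additions of size at most \(|h|\). Only then does Theorem~\ref{thm:rangesumlb} yield \(|h| = \Omega(m \cdot \alpha(m,n))\). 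Without an argument of this shape, your claim that the smoothing gates ``must be assembled from atoms'' remains unsubstantiated, and in fact the paper's proof never needs it --- the structural pruning-and-relabeling argument is both necessary and sufficient.
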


\section{Computing All-Marginals} \label{sec:allmarginals}

In this section, we focus on the specific task of computing All-Marginals on a knowledge base represented as a deterministic and structured decomposable circuit. Remember that the goal is to compute the partial derivative of the circuit with respect to the weight of each literal (Equation~\ref{eq:allmarginal} in Section~\ref{sec:smoothingrole}). If the input circuit is smooth, then we can solve the task in time linear in the size of the circuit. Therefore, with the techniques in Section~\ref{sec:smoothingcircuits}, given an input deterministic and structured decomposable circuit, we can smooth it and then convert it into an arithmetic circuit to compute All-Marginals, all in time \(O(m \cdot \alpha(m,n))\). In this section, we show a more efficient algorithm that bypasses smoothing altogether, when we assume that the weight function also supports division and subtraction and is always positive (so that we never divide by zero). The method that we propose takes time \(O(m)\), which is optimal and saves us the effort of modifying the input circuit.

\setlength\multicolsep{3pt}

\def\main{\texttt{\mbox{main}}}
\def\getinterval{\texttt{\mbox{getinterval}}}
\def\allmarginals{\texttt{\mbox{all-marginals}}}
\def\bottomup{\texttt{\mbox{bottom-up}}}
\def\topdown{\texttt{\mbox{top-down}}}
\begin{algorithm}[ht]
\caption{\allmarginals(\(g, w\)) \label{alg:allmarginals}}

We compute partial derivatives of positive literals. The negative literals are handled similarly.

\Ainput{A deterministic and structured decomposable circuit \(g\) on \(n\) variables and a weight function \(w\) that is always positive and supports \(\oplus,\ominus,\otimes,\oslash\).}

\Aoutput{Partial derivatives \(d_j\) for \(1 \leq j \leq n\).}

\begin{multicols}{2}

\textbf{\main(\(g, w\)):}

\begin{algorithmic}[1]
\STATE \(s \gets \bottomup(g, w)\) \COMMENT{requires \(\oplus,\otimes,\oslash\)}
\STATE \textbf{return} \(\topdown(g,w,s)\)
\end{algorithmic}
\vspace{1mm}

\textbf{\topdown(\(g, w, s\)):}

\begin{algorithmic}[1]
\STATE \(D \gets \{\text{root of g} : s[\text{root of g}]\}\) \COMMENT{cache}
\FOR{gates \(p\) in \(g\), parents before children}
    \STATE \(\algorithmicif \ p \text{ is a leaf } \algorithmicthen \ d_p \gets D[p]\)
    \IF{\(p\) is a \(\otimes\)-gate with children \(C\)}
        \STATE \(m \gets (\bigotimes_{k \in C}{s[k]}) \otimes D[p]\)
        \FOR{each child \(k\) in \(C\)}
            \STATE \(D[k] \gets D[k] \oplus (m \oslash s[k])\)   
        \ENDFOR
    \ENDIF
    \IF{\(p\) is a \(\oplus\)-gate with children \(C\)}
        \FOR{each child \(k\) in \(C\)}
            \STATE \(l_1,r_1,l_2,r_2 \gets \getinterval(p,k)\)
            \STATE \(\Delta_{l_1} \gets \Delta_{l_1} \oplus D[p]\)
            \STATE \(\Delta_{r_1+1} \gets \Delta_{r_1+1} \ominus D[p]\)
            \STATE \(\Delta_{l_2} \gets \Delta_{l_2} \oplus D[p]\)
            \STATE \(\Delta_{r_2+1} \gets \Delta_{r_2+1} \ominus D[p]\)
            \STATE \(D[k] \gets D[k] \oplus D[p]\)
        \ENDFOR
    \ENDIF
\ENDFOR

\STATE \(d_1 \gets d_1 \oplus \Delta_1\)
\STATE \(\algorithmicfor \ i \gets [2,n] \ \algorithmicdo  \ d_j \gets d_{j-1} \oplus \Delta_j\)
\STATE \textbf{return} \(d\)
\end{algorithmic}
\vspace{1mm}

\end{multicols}
\end{algorithm}

The algorithm is a form of backpropagation, and goes as follows (Algorithm~\ref{alg:allmarginals}). First, we compute the circuit output using a linear bottom-up pass over the circuit in the \(\bottomup\) subroutine, the details of which are omitted.
During this process, we keep track of the contribution of each internal gate using the dictionary \(s\). Next, we traverse the circuit top-down
in order to compute the partial derivative of each gate. At a \(\otimes\)-gate or \(\oplus\)-gate, we propagate the partial derivative down to the children as needed. However, since the circuit is not smooth, there may be missing variables in the children of \(\oplus\)-gates, in which case the propagation is incomplete. The challenge is then to efficiently complete the propagation to the missing variables. We optimize this propagation step using range increments, which gives us the next theorem with a proof in the Appendix.

\begin{theorem} \label{thm:allmarginal}
The All-Marginals task on a deterministic and structured decomposable circuit \(g\) and a weight function \(w\) that is always positive and supports \(\oplus,\ominus,\otimes,\oslash\) has time complexity \(\Theta(m)\), where \(n\) is the number of variables and \(m\) is the size of \(g\).
\end{theorem}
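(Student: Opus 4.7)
The plan is to establish correctness and linear-time complexity of Algorithm~\ref{alg:allmarginals} separately, viewing the whole procedure as a backpropagation scheme adapted to an unsmoothed circuit.

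For correctness, I would first argue that \bottomup{} computes a suitably renormalized value $s[p]$ at each gate so that $s[\text{root}]$ agrees with the AMC output on the conceptually smoothed circuit. This is essentially the renormalization trick discussed in Section~\ref{sec:smoothingrole}: because the weight function is everywhere positive and admits division, the missing-variable factors at each non-smooth $\oplus$-gate can be rescaled away in a single $O(m)$ bottom-up pass, which is why \bottomup{} only needs $\oplus$, $\otimes$, and $\oslash$. With the meaning of $s[p]$ fixed, the top-down pass computes at each gate $p$ a quantity $D[p]$ representing an upstream partial derivative of $s$ with respect to $s[p]$. At a $\otimes$-gate the product rule yields the update: add $D[p] \otimes s[p] \oslash s[k]$ to $D[k]$, which is precisely the rule implemented by the algorithm. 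At a $\oplus$-gate the story is subtler, because each child $k$ may be missing variables still present in $u_{\rho(p)}$, so the backpropagated signal $D[p]$ must reach not only $k$ itself but every literal whose variable lies in $u_{\rho(p)} \setminus u_{\rho(k)}$.

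The key observation is that by Lemma~\ref{lem:intervalgap}, this missing-variable set is the union of at most two intervals $[l_1,r_1] \cup [l_2,r_2]$ in the vtree's in-order. Thus the would-be increment $D[p]$ to each such literal's marginal can be recorded in $O(1)$ using a standard difference array: increment $\Delta_{l_i}$ and decrement $\Delta_{r_i+1}$ by $D[p]$ for $i=1,2$. Once the top-down pass completes, a single prefix-sum scan over $\Delta$ recovers the aggregate range-update contributions at each variable, which are then combined with the direct contributions $D[p]$ deposited at leaf gates. The correctness argument is an induction on the circuit, processed parents-before-children: the inductive invariant is that after all parents of $p$ have been handled, $D[p]$ together with the current $\Delta$ encode exactly the contribution of every root-to-$p$ path in the virtually smoothed circuit.

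For the complexity analysis, \bottomup{} runs in $O(m)$ by inspection. In \topdown{} each gate is visited once and performs work linear in its fan-in, provided \getinterval{} runs in $O(1)$. The latter is achieved by precomputing, in $O(n+m)$ time, the mapping $\rho$ together with the in-order indices $[l_v, r_v]$ of $u_v$ at each vtree node $v$; then $u_{\rho(p)} \setminus u_{\rho(k)}$ is extracted by constant-time interval arithmetic. The closing prefix-sum scan is $O(n) = O(m)$, and the matching $\Omega(m)$ lower bound is immediate because every edge of $g$ must be read. The principal obstacle in turning this sketch into a formal proof is a clean formalization of what $D[p]$ represents on a non-smooth circuit, and verifying that the multiplicative updates at $\otimes$-gates compose correctly with the additive range-updates at $\oplus$-gates so that the per-literal derivatives emerge precisely after the final prefix sum.
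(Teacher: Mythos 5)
Your proposal is correct and follows the same approach as the paper's proof of Theorem~\ref{thm:allmarginal}: both reduce the propagation of partial derivatives at $\oplus$-gates to at most two range increments per parent--child pair (via Lemma~\ref{lem:intervalgap}) and aggregate all range increments in $O(m)$ total using the standard difference-array trick with a single closing prefix-sum scan. The extra detail you supply on backpropagation correctness and on an $O(1)$ implementation of \getinterval{} is consistent with, and fills out, what the paper leaves implicit.
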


\section{On Retaining Structuredness} \label{sec:structuredness}

Recall that our smoothing algorithm in Theorem~\ref{thm:smoothingub} does not preserve structuredness of the input circuit, because it constructs smoothing gates in a way that is efficient but not structured. While structuredness is not required to solve problems such as AMC or all-marginals, it is still useful because it allows for a polytime conjoin operation, multiplication of distributions, and more (see Section~\ref{sec:background}). In this section, we show that we cannot match the performance of Theorem~\ref{thm:smoothingub} while retaining structuredness, because \emph{any} smoothing algorithm that maintains the same vtree structure must run in quadratic time. We leave open the question of whether there would be an efficient smoothing algorithm producing a circuit structured with a different vtree.

\begin{proposition} \label{prop:structuredbound}
The task of smoothing a (deterministic and) structured decomposable circuit \(g\) while enforcing the same vtree has space complexity \(\Theta(hm)\), where \(h\) is the height of the vtree and \(m\) is the size of \(g\).
\end{proposition}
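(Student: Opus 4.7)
The plan is to establish the matching upper and lower bounds $O(hm)$ and $\Omega(hm)$ on the size of the output when smoothing is forced to respect the input vtree $V$.

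\textbf{Upper bound.} The approach generalizes the algorithm of Section~\ref{sec:smoothingcircuits} by keeping all smoothing gates structured with respect to $V$. First I would precompute, for every vtree node $v$, a structured tautological gate $T_v$ on $u_v$ defined recursively by $T_v = T_{v_l} \otimes T_{v_r}$ with $T_{\text{leaf}(x)} = x \oplus -x$; sharing all inner subgates across different roots, the total size of $\{T_v : v \in V\}$ is $O(n)$. Then for every $\oplus$-gate $p$ and every child $c$, I would construct the smoothing gate $SG(u_{\rho(p)} \setminus u_{\rho(c)})$ by walking up the vtree path from $\rho(c)$ to $\rho(p)$; at each step from a node $v$ to its parent $v'$, I introduce a fresh $\otimes$-gate that combines the running construction with $T_s$, where $s$ is the sibling of $v$ under $v'$, ordered to match the left/right role of $v$. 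This adds $O(h)$ new gates per child-of-$\oplus$ edge, and the input circuit has $O(m)$ such edges, so the final circuit has $O(hm)$ gates.

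\textbf{Lower bound.} For the matching $\Omega(hm)$ bound, the plan is to take a right-linear vtree $V$ of height $h$ with leaves $x_1,\dots,x_{h+1}$, so that every "structured" subset of variables is a contiguous interval and every structured tautological gate on such an interval has a forced right-linear skeleton determined by $V$. I would construct a family of input circuits of size $\Theta(m)$ consisting of $\Theta(m/h)$ disjoint gadgets, each containing $\Theta(h)$ small $\oplus$-gates whose children are positioned at progressively deeper nodes on the spine of $V$, forcing the insertion of smoothing gates for the $h$ distinct \emph{prefix} intervals $\{x_1\},\{x_1,x_2\},\dots,\{x_1,\dots,x_h\}$. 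The crucial point is that in the enforced right-linear vtree the prefix skeletons are nested in the \emph{wrong} direction: the internal $\otimes$-gate shape of $SG(\{x_1,\dots,x_k\})$ is incompatible with any proper sub-skeleton inside $SG(\{x_1,\dots,x_{k'}\})$ for $k'\neq k$, so these $h$ smoothing gates jointly force $\Omega(h^2)$ fresh internal gates. Summing over the $\Theta(m/h)$ gadgets yields $\Omega(hm)$.

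\textbf{Main obstacle.} The technically delicate part is the no-sharing claim in the lower bound: I must argue that in the enforced right-linear vtree no structured smoothing circuit, however cleverly wired, can reuse internal $\otimes$-gates across the different prefix tautologies. I would discharge this by a case analysis on the mapping $\rho$ of each internal $\otimes$-gate in a purported output: for each vtree node $v_i$, any $\otimes$-gate mapped to $v_i$ has its left input forced to be a leaf in $\{x_i,\neg x_i\}$ (by decomposability and the shape of $V$) and its right input forced to lie entirely under $v_{i+1}$, so the gate uniquely determines the interval it computes. This lets each interior shape be used by only $O(1)$ of the $h$ forced prefixes per gadget, giving the claimed $\Omega(hm)$ after a routine count. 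The upper-bound construction above shows tightness.
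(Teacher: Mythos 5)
Your upper bound matches the paper's: both precompute one shared structured tautological gate per vtree node and walk up the vtree path for each child of a \(\oplus\)-gate, giving \(O(hm)\). That part is fine.

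The lower bound has a genuine gap, and it is centered on exactly the point you flag as the ``main obstacle.'' Your accounting charges \(\Omega(h^2)\) fresh internal \(\otimes\)-gates per gadget and sums over \(\Theta(m/h)\) gadgets, but all gadgets live on the same \(h+1\) variables, so the prefix tautologies \(SG(\{x_1,\dots,x_k\})\) they ``force'' are identical across gadgets and can be shared. Your observation that the internal \(\otimes\)-gates of \(SG(\{x_1,\dots,x_k\})\) and \(SG(\{x_1,\dots,x_{k'}\})\) cannot coincide (because their \(vars\) sets differ at each vtree level) is correct, but it only yields \(\Omega(h^2)\) \emph{total}, not \(\Omega(h^2)\) per gadget. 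To rescue the per-gadget count you would have to argue that the gates \emph{wiring} each gadget's children into those tautologies cannot be shared, and then, more seriously, that no restructured smooth circuit computing the same Boolean function could avoid these chains altogether. The proposition is a lower bound on the \emph{task}, i.e., on the size of any equivalent smooth circuit that respects the vtree, not just the outputs of smoothing-gate-style algorithms; your argument does not address adversarial restructuring.

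The paper's proof handles this by choosing a specific function \(f\) on a right-linear vtree of height \(h=n\) such that (i) \(f\) has a small structured circuit \(g\) of size \(m=O(2^{n/3})\) (via an OBDD), and (ii) \(f\) admits \(2^{n/3}-1\) ``certificates'' that are pairwise incompatible at the middle third of vtree levels, in the sense that if two certificates shared a gate mapping to a middle vtree node, one could splice them to produce a certificate of an instantiation that does not satisfy \(f\). Since every certificate of a smooth decomposable circuit must have a gate at every vtree level, the \(\Theta(n)\) middle-level gates of the \(2^{n/3}-1\) certificates are all distinct, forcing size \(\Omega(n\cdot 2^{n/3})=\Omega(hm)\). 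This certificate/splicing argument is the ingredient your proposal is missing: it is what makes the bound apply to an arbitrary smooth structured circuit for \(f\), rather than to a particular smoothing strategy.
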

\begin{proof}

\textbf{Upper bound:} We construct smoothing gates 
following the structure of the vtree: for each vtree node \(p\) with children \(p_l\) and \(p_r\), we build in constant time a structured smoothing gate for the variables that are descendants of \(p\), using the smoothing gate for the variables that are descendants of \(p_l\) and the one for the variables that are descendants of \(p_r\). Now, we can use these gates to smooth the circuit: any interval of variables in the in-order traversal of the vtree can be written as \(h\) intervals corresponding to vtree nodes, so smoothing \(g\) has time complexity \(O(hm)\). As with Theorem~\ref{thm:rangesumub}, the process of attaching smoothing gates preserves determinism.

\textbf{Lower bound:} Consider a right-linear vtree \(v\) with height \(h=n\) and variables \(X_1,\ldots,X_n\), in that order. For simplicity, let \(n\) be a multiple of \(3\), and consider the following functions for \(y \in [0, 2^{n/3})\):
\begin{align*}
\begin{array}{cc}
J_y = \bigotimes_{i=1}^{n/3}{\beta(i,y) x_i} \quad \quad & \quad \quad
K_y = \bigotimes_{i=2n/3+1}^{n}{\beta(i,y) x_i}
\end{array}
\end{align*}
where
\(\beta(i,y) = 1\) if the \(i\)-th bit of the binary representation of \(y\) is set, and \(-1\) otherwise.

Next, consider \(f = (\bigotimes_{i=1}^{n}{-x_i}) \oplus (\bigoplus_{y=1}^{2^{n/3-1}} (J_y \otimes K_y))\). An instantiation satisfies \(f\) if all its literals are negative, or if the sign of its literals from \(X_1,\ldots,X_{n/3}\) (in order) equals those from \(X_{2n/3+1},\ldots,X_n\), and are not all negative. The all-negative case is included so that \(f\) mentions all \(n\) variables, as otherwise \(f\) would already be smooth.
We can build a circuit \(g\) with size \(O(2^{n/3})\) that respects~\(v\) and computes \(f\)
using an Ordered Binary Decision Diagram representation~\citep{Bryant1986GraphBasedAF}. Yet, any smooth circuit \(G\) that respects \(v\) and computes \(f\) has size \(\Omega(n \cdot 2^{n/3})\), as we see next.

First, we use a standard notion on circuits which we refer to as a \emph{certificate}, following the terminology by~\cite{Bova2014ExpanderCH}. A certificate is formed by keeping exactly one child of each \(\oplus\)-gate, and keeping all children of each \(\otimes\)-gate.
Since \(G\) is smooth and decomposable, every certificate of \(G\) must have exactly \(n\) literals, and corresponds to an instantiation of the \(n\) variables. Let \(C_y\) be a certificate of \(G\) whose corresponding instantiation satisfies \(J_y \otimes K_y\), and let \(C_z\) be a certificate of \(h\) whose corresponding instantiation satisfies \(J_z \otimes K_z\), with \(y \neq z\) and \(y,z \in [1,2^{n/3})\). 

Next, let \(p_i\) denote the parent of the vtree node corresponding to variable \(X_i\). We will show that \(C_y\) and \(C_z\) cannot share a gate \(w\) which maps to vtree node \(p_k\) for \(k \in [n/3+1, 2n/3]\). Suppose that such a gate \(w\) exists. Then we can form a new certificate by swapping out the subtree of certificate \(C_y\) rooted at \(w\) with the subtree of certificate \(C_z\) rooted at \(w\). This new certificate now satisfies \(J_y \otimes K_z\) and is a valid certificate of the circuit \(G\), which contradicts the fact that \(G\) computes \(f\).

To finish, we consider \(2^{n/3}-1\) different certificates satisfying \(J_y \otimes K_y\) for \(y \in [1,2^{n/3})\). None of these certificates can share any gates that map to vtree nodes \(p_k\) for \(k \in [n/3+1, 2n/3]\). It follows that the output circuit \(G\) has size \(\Omega(n \cdot 2^{n/3})\).
Since the input circuit \(g\) is deterministic (because it is an OBDD) and the output circuit \(G\) need not be deterministic, the lower bound applies to both smoothing tasks (with and without determinism).
\end{proof}

\section{Experiments} \label{sec:experiments}

We experiment on our smoothing algorithm in Section~\ref{sec:smoothingcircuits} and our All-Marginals algorithm in Section~\ref{sec:allmarginals}.\footnote{The code for our experiments can be found at \url{https://github.com/AndyShih12/SSDC}. There are some differences in our implementation, which we explain in the repository.} Experiments were run on a single Intel(R) Core(TM) i7-3770 CPU with 16GB of RAM.

\paragraph*{Smoothing Circuits.}
We first study the smoothing task on structured decomposable circuits using our new smoothing algorithm (Section~\ref{sec:smoothingcircuits}), which we compare to the naive quadratic smoothing algorithm. We construct hand-crafted circuits for which many smoothing gates are required, each of which covers a large interval. In particular, we pick \(m\) large intervals \(I_1, ..., I_m\) and for each interval we construct the structured gate \(G_i = \bigotimes_{j \notin I_i}{x_j}\) for a balanced vtree. Then we take each \(G_i\) and feed them into one top-level \(\oplus\)-gate.
This triggers the worst-case quadratic behavior of the naive smoothing algorithm, while our new algorithm has near-linear behavior.

The speedup of our smoothing algorithm is captured in Table~\ref{tab:smooth}. The {\bfseries Size} column reports the size of the circuit. The {\bfseries Naive} column reports the time taken by the quadratic smoothing algorithm, the {\bfseries Ours} column reports the same value using our near-linear algorithm, and the {\bfseries Speedup} column reports the relative decrease in time. The values are averaged over \(4\) runs.

\paragraph*{Collapsed Sampling.}
We next benchmark our method for computing All-Marginals in Section~\ref{sec:allmarginals} on the task of collapsed sampling, which is a technique for probabilistic inference on factor graphs. The collapsed sampling algorithm performs approximate inference on factor graphs by alternating between \emph{knowledge compilation phases} and \emph{sampling phases}~\citep{FriedmanNeurIPS18}. In the sampling phase, the algorithm computes All-Marginals as a subroutine.

We replace the original quadratic All-Marginals subroutine by our linear time algorithm (Algorithm~\ref{alg:allmarginals}). The requirements for Algorithm~\ref{alg:allmarginals} are satisfied since the weight function \(w\) is defined over the reals and is always positive in the experiments by~\cite{FriedmanNeurIPS18}.
In Table~\ref{tab:samplingSeg} we report the results on the {\em Segmentation-11} network, which is a network from the 2006-2014 UAI Probabilistic Inference competitions. This particular network is a factor graph that was used to do image segmentation/classification (figure out what type of object each pixel corresponds to)~\citep{Forouzan2015ApproximateII}.
Experiments were also performed on other networks from the inference competition, such as {\em DBN-11} and {\em CSP-13} (Table~\ref{tab:samplingDBN} \&~\ref{tab:samplingCSP}). For all three networks we see a decrease in the number of \(\oplus,\ominus,\otimes,\oslash\) operations needed for each All-Marginal computation.
The {\bfseries Size} column reports the size threshold during the knowledge compilation phase. The {\bfseries Naive} column reports the number of \(\oplus,\ominus,\otimes,\oslash\) operations using the original All-Marginals subroutine, the {\bfseries Ours} column reports the same value using Algorithm~\ref{alg:allmarginals}, and the {\bfseries Impr} column reports the relative decrease in operations. The values are averaged over \(4\) runs.

\setlength{\tabcolsep}{3pt}
\begin{table}
\center
\footnotesize
\caption{Experiments on smoothing hand-crafted circuits and experiments on computing All-Marginals as part of the collapsed sampling algorithm. Sizes are reported in thousands (k).}
    \begin{subtable}{.48\linewidth}
      \centering
        \caption{Time (in seconds) taken to smooth circuits.\label{tab:smooth}}
        \begin{tabular}{rrrr}
        \toprule
        {\bfseries Size}
        & \multicolumn{1}{c}{{\bfseries Naive}}
        & \multicolumn{1}{c}{{\bfseries Ours}}
        & {\bfseries Speedup \(\times\)} \\
        \midrule
            40k & 0.82 \(\pm\) 0.01 & 0.04 \(\pm\) 0.01 & \phantom{00}21 \(\pm\) \phantom{0}1 \\
            416k & 50 \(\pm\) 0.3\phantom{0} & 0.31 \(\pm\) 0.01 & \phantom{0}161 \(\pm\) \phantom{0}6 \\
            1,620k & 293 \(\pm\) 2\phantom{.00} & 0.74 \(\pm\) 0.04 & \phantom{0}390 \(\pm\) 30\\
            8,500k & 6050 \(\pm\) 20\phantom{.0} & 4.13 \(\pm\) 0.09 & 1470 \(\pm\) 40 \\
        \bottomrule
        \end{tabular}
    \end{subtable} 
    \hfill
    \begin{subtable}{.51\linewidth}
      \centering
        \caption{Number of \(\oplus,\ominus,\otimes,\oslash\) operations to compute All-Marginals when sampling the Segmentation-11 network.\label{tab:samplingSeg}}
        \begin{tabular}{rrrr}
        \toprule
        {\bfseries Size}
        & \multicolumn{1}{c}{{\bfseries Naive}}
        & \multicolumn{1}{c}{{\bfseries Ours}}
        & {\bfseries Impr \%} \\
        \midrule
            100k & 28,494 \(\pm\) \phantom{0,}598 & 20,207 \(\pm\) \phantom{0,}411 & 29 \(\pm\) 3\\
            200k & 55,875 \(\pm\) 1,198 & 36,101 \(\pm\) 1,522 & 35 \(\pm\) 5\\
            400k & 86,886 \(\pm\) 6,330 & 56,094 \(\pm\) \phantom{0,}817 & 35 \(\pm\) 6\\
        \bottomrule
        \end{tabular}
    \end{subtable}%
    
    \begin{subtable}{.49\linewidth}
      \centering
        \caption{Number of \(\oplus,\ominus,\otimes,\oslash\) operations to compute All-Marginals when sampling the DBN-11 network.\label{tab:samplingDBN}}
        \begin{tabular}{rrrr}
        \toprule
        {\bfseries Size}
        & \multicolumn{1}{c}{{\bfseries Naive}}
        & \multicolumn{1}{c}{{\bfseries Ours}}
        & {\bfseries Impr \%} \\
        \midrule
            100k & 172,610 \(\pm\) 1,821 & 26,807 \(\pm\) 644 & 84 \(\pm\) 1\\
            200k & 344,748 \(\pm\) 3,881 & 51,864 \(\pm\) 851 & 85 \(\pm\) 1\\
            400k & 626,235 \(\pm\) 9,985 & 99,567 \(\pm\) 697 & 84 \(\pm\) 1\\
        \bottomrule
        \end{tabular}
    \end{subtable}%
    \hfill
    \begin{subtable}{.49\linewidth}
      \centering
        \caption{Number of \(\oplus,\ominus,\otimes,\oslash\) operations to compute All-Marginals when sampling the CSP-13 network.\label{tab:samplingCSP}}
        \begin{tabular}{rrrr}
        \toprule
        {\bfseries Size}
        & \multicolumn{1}{c}{{\bfseries Naive}}
        & \multicolumn{1}{c}{{\bfseries Ours}}
        & {\bfseries Impr \%} \\
        \midrule
            100k & 36,531 \(\pm\) 1,484 & 20,814 \(\pm\) \phantom{0,}619 & 43 \(\pm\) 4\\
            200k & 90,352 \(\pm\) 3,593 & 38,670 \(\pm\) 1,438 & 57 \(\pm\) 3\\
            400k & 122,208 \(\pm\) 9,971 & 55,269 \(\pm\) 1,819 & 54 \(\pm\) 6\\
        \bottomrule
        \end{tabular}
    \end{subtable}%
\end{table}

\section{Conclusion} \label{sec:conclusion}

In this paper we considered the task of smoothing a circuit. Circuits are widely used for inference algorithms for discrete probabilistic graphical models, and for discrete density estimation. The input circuits are required to be smooth for many of these probabilistic inference tasks, such as Algebraic Model Counting and All-Marginals. We provided a near-linear time smoothing algorithm for structured decomposable circuits and proved a matching lower bound within the class of smoothing-gate algorithms for decomposable circuits. We introduced a technique to compute All-Marginals in linear time without smoothing the circuit, when the weight function supports division and subtraction and is always positive. We additionally showed that smoothing a circuit while maintaining the same vtree structure cannot be sub-quadratic, unless the vtree has low height. Finally, we empirically evaluated our algorithms and showed a speedup over both the existing smoothing algorithm and the existing All-Marginals algorithm.

\paragraph{Acknowledgments}

This work is partially supported by NSF grants \#IIS-1657613, \#IIS-1633857, \#CCF-1837129, DARPA XAI grant \#N66001-17-2-4032, NEC Research, and gifts from Intel and Facebook Research. We thank Louis Jachiet for the helpful discussion of Theorem~\ref{thm:smoothingub}.

\bibliographystyle{named}
\bibliography{smoothing}

\newpage

\appendix

\section{Proof of Theorem~\ref{thm:smoothingub}}
\def\semigrouprangesum{\texttt{\mbox{semigroup-range-sum}}}
\def\preprocess{\texttt{\mbox{preprocess}}}
\def\inverseack{\texttt{\mbox{inverse-ack}}}
\def\ack{\texttt{\mbox{ack}}}
\def\solveinterval{\texttt{\mbox{solve-interval}}}

We analyze the semigroup range-sum scheme in Section 3 of~\cite{Chazelle1989ComputingPS}, and show that it can be implemented in \(O(m \cdot \alpha(m,n))\) time.

The scheme, as shown in Algorithm~\ref{alg:semigrouprangesum}, goes as follows. Let \(R(t,k)\) denote the largest value of \(n\) such that for any \(m \geq n\), there exists an algorithm that solves the range-sum problem on \(n\) variables and \(m\) intervals, using \(kn\) preprocessing additions and \(t\) additions per interval. This gives a total of \(kn+tm\) additions. We will show that \(R(t,k)\) is an Ackermann function by showing the following: if \(a = R(t,k-3)\), \(b = R(t-2,a)\), and \(n = ab\), then \(R(t,k) \geq n\).

It is worth mentioning that which constants we use (\(2\) and \(3\) in this case) do not matter. Any fixed constant will prove our claim. When we modify the algorithm later, it is enough to see that the algorithm works for \emph{some} fixed constants.

For the base case we have \(R(t,1) \geq 2\) and \(R(1,k) \geq 2\). To show the inductive step we first describe the preprocessing procedure (see \preprocess{} in Algorithm~\ref{alg:semigrouprangesum}). We split the \(n\) variables into \(b\) contiguous blocks, each of size \(a\). For each single block (lines 4 \& 5), we compute its prefix and suffix sums and store the results in \(d\) (lines 6-12). Next, we perform an \emph{inner recursion} where we preprocess each inner block of size \(a\). Then, we treat the sum of each inner block as new variables, and perform an \emph{outer recursion} where we preprocess the single outer block of size \(b\). We store the preprocessed results from the recursive calls in \(d\).

By the induction hypothesis, we know that during the preprocessing procedure, each inner block of size \(a\) takes at most \(a \cdot (k-3)\) additions, and the outer block of size \(b\) takes at most \(b \cdot a\) additions. Furthermore, computing the prefix/suffix sums for each inner block takes at most \(2n\) additions. Thus, the total number of additions for preprocessing is at most \(ba(k-3) + ba + 2n = (k-3)n + 3n = kn\). 

Now we describe how to compute the sum of an interval using our preprocessed results. We start at the topmost level of recursion. If the interval completely falls within an inner block, then we do an inner recursion without performing any additions (Figure~\ref{fig:CRschemeInner}). Otherwise, the interval straddles multiple inner blocks. Since we have computed the prefix/suffix sums for each inner block, we can shave off the edges of our interval using two additions. The remaining interval can be represented as a sum of inner blocks, which we compute by performing an outer recursion (Figure~\ref{fig:CRschemeOuter}).

\begin{figure}
    \centering
    \begin{subfigure}[b]{0.45\linewidth}
        \centering
        \includegraphics[width=\linewidth]{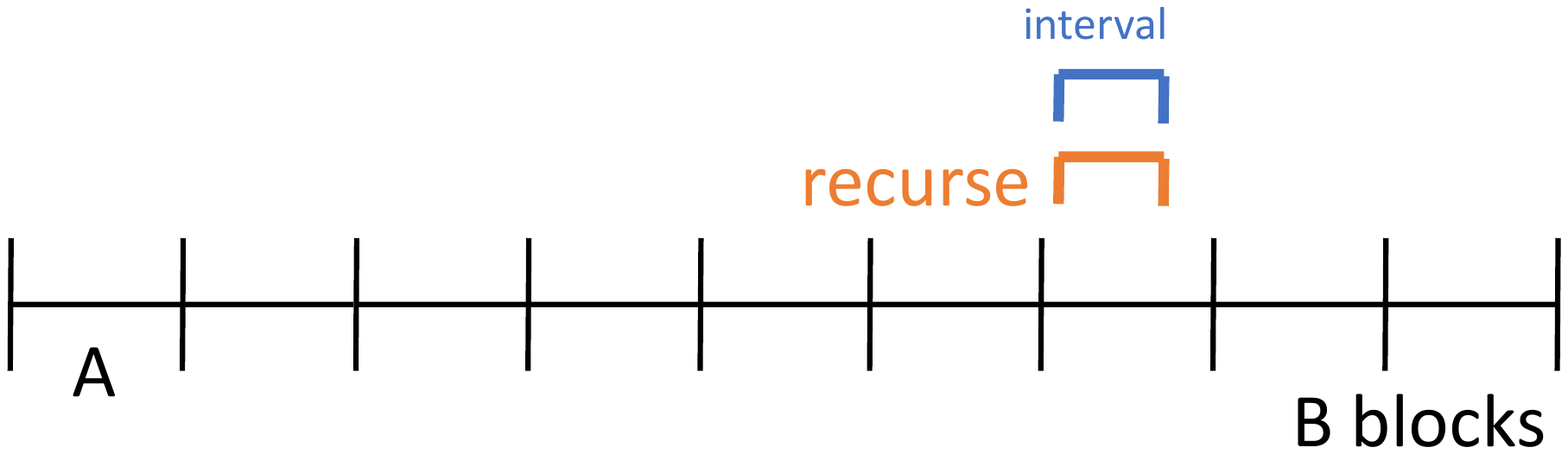}
        \caption{Inner recursion} \label{fig:CRschemeInner}
    \end{subfigure}
    \hspace{0.05\linewidth}
    \begin{subfigure}[b]{0.45\linewidth}
        \centering
        \includegraphics[width=\linewidth]{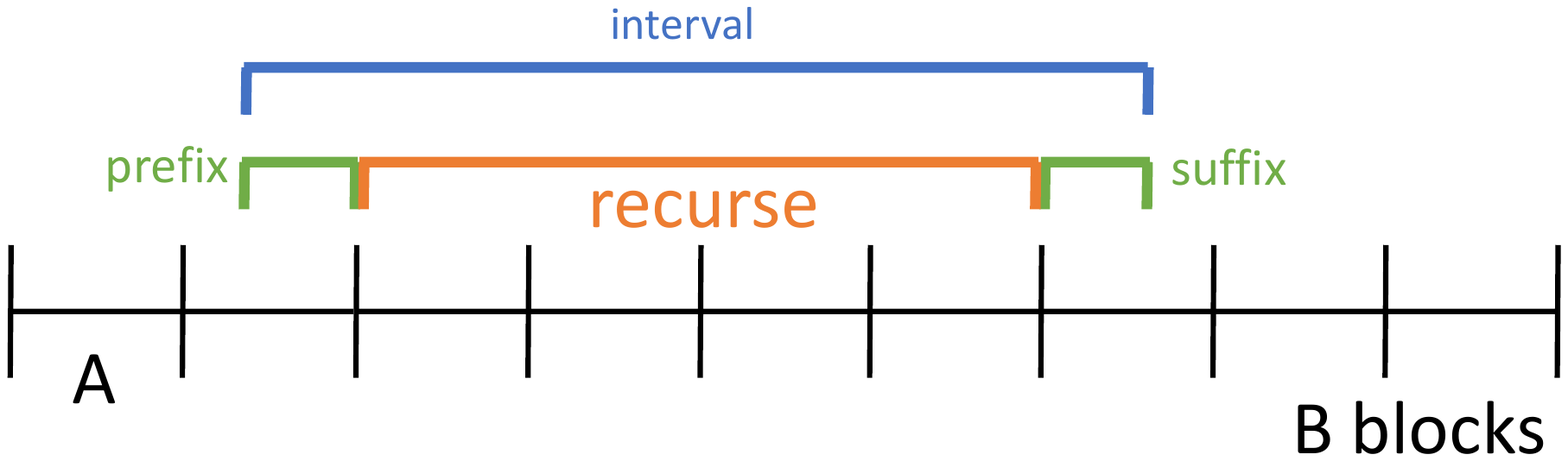}
        \caption{Outer recursion} \label{fig:CRschemeOuter}
    \end{subfigure}
    \caption{Recursive scheme for semigroup range-sum.}
    \label{fig:CRscheme}
\end{figure}

In the case of an inner recursion, we require \(t\) additions to fill the interval by induction. In the case of an outer recursion, we require \(2\) additions to shave off the edges of the interval, and the remaining sum can be computed using \(t-2\) additions by induction.


We have shown that if \(n \leq R(t,k-3)R(t-2,R(t,k-3))\), then we can solve the semigroup range-sum problem on \(n\) variables and \(m \geq n\) intervals using \(kn\) preprocessing additions and \(t\) additions per interval. Therefore \(R(t,k)\) grows as fast as an Ackermann function, so the total number of required additions is \(O(m \cdot \alpha(m,n))\).

\begin{algorithm}[ht]
\caption{\semigrouprangesum(\(I,w\)) \label{alg:semigrouprangesum}}
\Ainput{A sequence of intervals \(I = [a_1,b_1],\ldots,[a_m,b_m]\) on \(n\) variables with weights \(w = [w_0,\ldots,w_{n-1}]\) on the variables.}

\Aoutput{The sum of weights of variables for each interval.}

\textbf{\preprocess(\(\W,t,k\)):}

\begin{algorithmic}[1]
\STATE \(a \gets \ack(t,k-3) \quad\quad b \gets \ack(t-2,a)\)
\STATE \(d \gets \emptyset\)
\FOR{\(i \gets [0,b)\)}
    \STATE \(s,e \gets a \cdot i \ , \ a \cdot (i+1)\) \COMMENT{start, end of one inner block}
    \STATE \(A \gets [w_{s},...,w_{e-1}]\) \COMMENT{inner block}
    \STATE \(p_{s} \ , \ q_{e-1} \gets w_s \ , \ w_{e-1}\)
    \FOR[prefix gates: walk forward]{\(j \text{ from } s+1 \text{ to } e-1\)}
        \STATE \(p_j \gets p_{j-1} \oplus w_j\)
        \STATE \(d \gets d \cup \{p_j\}\)
    \ENDFOR
    \FOR[suffix gates: walk backward]{\(j \text{ from } e-2 \text{ down to } s\)}
        \STATE \(q_j \gets q_{j+1} \oplus w_j\)
        \STATE \(d \gets d \cup \{q_j\}\)
    \ENDFOR
    \STATE \(d \gets d \cup \preprocess(A,t,k-3)\)
    \STATE \(B_i \gets p_{e-1}\) \COMMENT{sum of inner block}
\ENDFOR
\STATE \(B \gets [B_{0},...,B_{b-1}]\) \COMMENT{outer block}
\STATE \(d \gets d \cup \preprocess(B,t-2,a)\)
\RETURN \(d\) \COMMENT{preprocessed sums}
\end{algorithmic}
\vspace{1mm}

\textbf{\main(\(I,w\)):}
\begin{algorithmic}[1]
\STATE \(n,m \gets \text{number of variables, number of intervals}\)
\STATE \(c \gets \inverseack(m,n)\)
\STATE \(d \gets \preprocess([w_0,...,w_{n-1}],2c,3c)\)
\FOR{\([a_i,b_i]\) in \(I\)}
    \STATE \(r_i \gets \solveinterval(a_i,b_i,2c,3c,d)\)
\ENDFOR
\RETURN \([r_1,\ldots,r_m]\)
\end{algorithmic}
\vspace{1mm}

\end{algorithm}

In the rest of Algorithm~\ref{alg:semigrouprangesum}, we spell out the algorithm in more detail. The subroutines~\inverseack{} and~\ack{} compute the necessary parameters, and~\preprocess{} performs the recursive preprocessing of the intervals. The subroutine~\solveinterval{} computes the sum of an interval using the recursive scheme described above and shown in Figure~\ref{fig:CRscheme}. The details of these subroutines are not provided.

It remains to show that we can implement this scheme without any extra time overhead. That is, can we \emph{find} the sequence of additions to perform using \(O(m \cdot \alpha(m,n))\) operations? The general strategy is to precompute a mapping from interval sizes to inner recursion depth, so that given an interval we can perform multiple consecutive inner recursions in one step.

The preprocessing step requires no additional overhead for finding the sequence of additions, as shown in Algorithm~\ref{alg:semigrouprangesum}; determining which addition to perform next only takes a constant amount of time (assuming we optimize with tail recursion so we do not spend a non-constant amount of time unwinding the stack). Similarly, when we need to perform an outer recursion during the processing of one interval, we only require a constant amount of time to find the two additions (prefix and suffix pieces in Figure~\ref{fig:CRschemeOuter}) and call the recursion. The problem arises when we need to perform an inner recursion. Since an inner recursion does not actually performs additions, we are not allowed any time at all to find and perform the proper recursive call. So if we perform multiple consecutive inner recursions, we will end up doing a non-constant amount of work for a single addition.

As such, we will present a technique that performs multiple consecutive inner recursions, which we will call a \emph{jump}, in a constant amount of time. After a single jump, we will either perform an outer recursion or hit a base case. In both cases, the scheme will immediately perform at least one addition, so we can absorb the (constant) cost of the jump into the addition.

\subsection{Jump Technique}

Suppose we are at some level of outer recursions given by some value \(t\). When we perform one inner recursion, we go from level \(R(t,k)\) to level \(R(t,k-c)\) for some constant \(c\). When we do a jump, we need to go from level \(R(t,k)\) to level \(R(t,k-cj)\) for some \(j \geq 1\). The details of the jumping technique are then as follows. During preprocessing, for each value of \(t\) we record the sequence of block sizes \(a^t_1 = R(t,k-c) \geq  a^t_2 = R(t,k-2c) \geq \ldots \geq a^t_{\lfloor k/c \rfloor} = R(t,k- \lfloor k/c \rfloor c)\). Then, for each value \(s \in [1,R(t,k)]\), we compute the smallest index \(i\) such that the block size \(a^t_i\) is \(\leq s\). We denote this computed index \(i\) by \(p^t_s\). This step can be done in time \(O(R(t,k) + k)\): by considering the values \(s \in [1,R(t,k)]\) in decreasing order, the indices \(p^t_s\) must be increasing in that order. So, we can compute all the indices with a two-pointer walk with cost \(O(R(t,k) + k)\), which is negligible since it is less than the original cost of precomputing the prefix and suffix sums at all the inner recursion levels for this outer recursion level (i.e., all choices of the value \(k\) for this value of \(t\)).

Given an interval of size \(s\) at outer recursion level \(t\), we can immediately find the value \(p^t_s\) and \(p^t_s - 1\). We claim that it suffices to look at inner recursion levels \(p^t_s\) and \(p^t_s - 1\). Let \(e_0 = R(t,k- p^t_s c)\) and \(e_1 = R(t,k-(p^t_s - 1)c)\). By definition, we have that \(e_0 \leq s \leq e_1\). 

\begin{itemize}
    \item In the case that the interval falls completely within one block of size \(e_1\), we will perform an outer recursion over level \(p^t_s\). We can visualize this scenario with Figure~\ref{fig:CRschemeOuter}, where \(A\) corresponds to \(e_0\) and \(AB\) corresponds to \(e_1\).
    \item Otherwise, the interval straddles exactly two blocks of size \(e_1\) at the previous inner recursion level (no more than two since \(e_1 \geq s\)). We can express the interval as a summation of a \emph{suffix sum over the first block} and a \emph{prefix sum over the second block}.
\end{itemize}

In both of the above scenarios, we skip the work of performing many inner recursive calls, and jump directly to either an outer recursive call or to a base case. So for each addition, we only do a constant amount of work, and the time complexity of solving the range-sum problem on \(n\) variables and \(m\) intervals is \(O(m \cdot \alpha(m,n))\).

\subsection{Padding}

There remains one complication: the last block in a call to the \preprocess{} function may have a different size from the rest of the blocks. For example, in Figure~\ref{fig:CRscheme}, if \(n\) is not a multiple of \(A\), then the last block will have size less than \(A\). To fix this without complicating the preprocessing algorithm, we simply pad the last block so that it is the same size as all other blocks. We also make sure that when we do an outer recursion, we only pad the original blocks as opposed to padding the padded blocks from the previous recursion level. This detail ensures that the cost of padding does not compound over multiple outer recursions. Altogether, the padding technique at most doubles the memory cost of the entire algorithm.

\section{Proof of Theorem~\ref{thm:smoothinglb}}
Take any set of \(m\) intervals, with \(m=n\). For simplicity we will let the \(m\) intervals be on \(n-2\) variables (in the range \([2,n-1]\) instead of \([1,n]\), by increasing \(n\) by \(2\) and shifting all intervals one step to the right) so that the intervals do not touch the endpoints.

First we construct prefix gates \(p_1 = x_1\) and \(p_k = p_{k-1} \otimes x_k, \forall k > 1\) in a chain-like fashion, and suffix gates \(s_n = x_n\), \(s_k = s_{k+1} \otimes x_k, \forall k < n\) in a chain-like fashion. Then for each interval \([a_i,b_i]\), construct the gate \(G_i = p_{a_i-1} \otimes s_{b_i+1}\). Next, let \(g\) be the circuit \(G_1 \oplus \ldots \oplus G_m \oplus p_{n} \oplus s_1\) (see Figure~\ref{fig:thm2circuit1}). We are attaching the \(p_n\) and \(s_1\) gate to ensure that \(g\) mentions all \(n\) variables, and that the smoothing-gate algorithm retains all prefix/suffix gates.

Since \(g\) mentions all \(n\) variables, each gate \(G_i\) also needs to mention all \(n\) variables to satisfy the smoothness property. By the construction of \(G_i\), it is missing exactly the variables \(\X_i = [X_{a_i},\ldots,X_{b_i}]\). We will show that running a smoothing-gate algorithm on \(g\) implicitly solves the semigroup range-sum problem on those intervals, by mapping the summation operation in the semigroup range-sum problem to the \(\otimes\)-gates in our circuits. Note that the circuit \(g\) is indeed decomposable and edge-contracted.

Consider a smooth and decomposable circuit \(h\) that is the output of running a smoothing-gate algorithm on \(g\). By Definition~\ref{def:smoothinggatealg}, there exists a bijection \(B\) from \(g\) to a subcircuit of \(h\) after edge-contraction. Let \(S\) denote the graph of this subcircuit before edge-contraction: we call \(S\) the \emph{skeleton graph} (see Figure~\ref{fig:thm2circuit2}). We make the two following observations.

First, we consider the gates \(B(G_i)\) for all \(i\). In the skeleton graph \(S\), there exists a path from \(B(g)\) to \(B(G_i)\), a path from \(B(G_i)\) to \(B(p_{a_i-1})\) and a path from \(B(G_i)\) to \(B(s_{b_i+1})\). We denote the set of gates on these paths (excluding the endpoints) over all \(i\) as \(T\). We observe that a gate in \(T\) must have exactly one child in \(S\), otherwise \(S\) cannot be edge-contracted into a circuit that is isomorphic to \(g\).

Since each \(\oplus\)-gate in \(T\) has exactly one child that is in the skeleton graph \(S\), we can modify \(h\) by disconnecting all other children (which do not belong to \(S\)) from these \(\oplus\)-gates, and edge-contracting these \(\oplus\)-gates. We note that this operation preserves smoothness and decomposability of \(h\), so each child of \(B(g)\) still mentions all \(n\) variables.

Second, we observe that the gate \(B(p_k)\) for any \(k\) cannot mention a variable outside of the range \([1,k]\). Otherwise, the circuit rooted at \(B(p_n)\) would implicitly contain a \(\otimes\)-gate that multiplies that variable with itself, thus violating the decomposability property. A similar argument applies to the gates \(B(s_k)\): they cannot mention a variable outside of the range \([k,n]\).

Let \(G^{\prime}_i\) denote the (unique) child of \(B(g)\) that is an ancestor of \(B(G_i)\). Recall that for any \(i\), \(G^{\prime}_i\) has the gates \(B(p_{a_i-1})\) and \(B(s_{b_i+1})\) as descendants. Furthermore, \(G^{\prime}_i\) does not have any other gate in \(\{B(p_j) : \forall j\} \cup \{B(s_j): \forall j\}\) as a descendant, otherwise it would either multiply two copies of variable \(1\) or multiply two copies of variable \(n\), and violate the decomposability property.
We now remove the set of edges in \(h\) that goes from some gate in \(T \cup \{B(G_j) : \forall j\}\) to some gate in \(\{B(p_j) : \forall j\} \cup \{B(s_j): \forall j\}\). By the above observations, the gate \(G^{\prime}_i\) must now mention exactly the variables \([a_i,b_i]\). See the transition from Figure~\ref{fig:thm2circuit2} to Figure~\ref{fig:thm2circuit3} as an example.

We now show how to extract the variables in each interval \([a,b]\) using the following relabelling scheme to remove all remaining \(\oplus\)-gates. First we remove all edges leading into \(G^{\prime}_1,\ldots,G^{\prime}_m\). Each of these \(m\) gates is still decomposable and smooth for the set of variables in its respective interval. Then for every \(\oplus\)-gate \(p\) in the circuit, take one of its input wires and reroute a copy of it to each gate that \(p\) feeds into. Each remaining \(\otimes\)-gate is now the product of one literal for each variable that was mentioned by its corresponding gate in the original circuit. These variables may be positive or negative literals, but we do not care about the polarity. We only need, for example, that if a \(\otimes\)-gate mentioned variables \(X_1,X_3,X_5\), then it is now a product of a literal of \(X_1\), a literal of \(X_3\), and a literal of \(X_5\).

After this operation, \(G^{\prime}_i\) is now exactly the product of all the variables in \([a_i,b_i]\). By setting the inputs to the circuits to be the value of the weights in the range-sum problem, and evaluating the circuits treating \(\otimes\) as addition, the value to which each gate \(G^{\prime}_i\) evaluates is the requested sum for the \(i\)-th interval. So, the circuit describes a sequence of additions to compute the sum of each interval.
We then apply Theorem~\ref{thm:rangesumlb}, which implies that the bound of \(\Omega(m \cdot \alpha(m,n))\) applies to the size of the output circuit \(h\).

\tikzstyle{vertex}=[circle,minimum size=20pt,inner sep=0pt]
\tikzstyle{edge} = [draw,thick,->]

\newcommand\myscale{1}

\begin{figure}
\center
\begin{tikzpicture}[scale=\myscale, auto,swap]
    
    \pgfmathsetmacro{\branchOneX}{-1.5}
    \pgfmathsetmacro{\branchTwoX}{0}
    \pgfmathsetmacro{\branchThreeX}{1.5}
    \pgfmathsetmacro{\branchY}{-3}
    \pgfmathsetmacro{\pRootX}{-4}
    \pgfmathsetmacro{\pRootY}{-4}
    \pgfmathsetmacro{\sRootX}{4}
    \pgfmathsetmacro{\sRootY}{-4}
    
    \foreach \pos/\id/\name in {
    {(0,0)/g/g},
    {(\pRootX,\pRootY)/pn/p_n}, {(\pRootX+1,\pRootY-1)/p3/p_3}, {(\pRootX+2,\pRootY-2)/p2/p_2}, {(\pRootX+3,\pRootY-3)/p1/p_1},
    {(\pRootX-1,\pRootY-1)/pxn/x_n}, {(\pRootX,\pRootY-2)/px3/x_3}, {(\pRootX+1,\pRootY-3)/px2/x_2}, {(\pRootX+2,\pRootY-4)/px1/x_1},
    {(\sRootX,\sRootY)/s1/s_1}, {(\sRootX-1,\sRootY-1)/s2/s_2}, {(\sRootX-2,\sRootY-2)/s3/s_3}, {(\sRootX-3,\sRootY-3)/sn/s_n},
    {(\sRootX+1,\sRootY-1)/sx1/x_1}, {(\sRootX,\sRootY-2)/sx2/x_2}, {(\sRootX-1,\sRootY-3)/sx3/x_3}, {(\sRootX-2,\sRootY-4)/sxn/x_n},
    {(\branchOneX,\branchY)/G1/G_1},{(\branchTwoX,\branchY)/G2/G_2},{(\branchThreeX,\branchY)/Gm/G_m}}
        \node[vertex] (\id) at \pos {$\name$};
        
    \foreach \source/ \dest in {
    pn/p3,p3/p2,p2/p1,
    pn/pxn,p3/px3,p2/px2,p1/px1,
    s1/s2,s2/s3,s3/sn,
    s1/sx1,s2/sx2,s3/sx3,sn/sxn,
    G1/p2,G1/sn,
    G2/p1,G2/sn,
    Gm/p1,Gm/s3}
        \path[edge] (\source) -- (\dest);
        
    \foreach \source/ \dest in {
    g/G1,g/G2,g/Gm,g/pn,g/s1}
        \path[edge,dashed] (\source) -- (\dest);
        
\end{tikzpicture}
\caption{A decomposable circuit \(g\) constructed based on input intervals. Edges are solid if the parent is a \(\otimes\)-gate, and edges are dashed if the parent is a \(\oplus\)-gate. In this example the input intervals are \(\{[3,3],[2,3],[2,2]\}\). \label{fig:thm2circuit1}}
\end{figure}
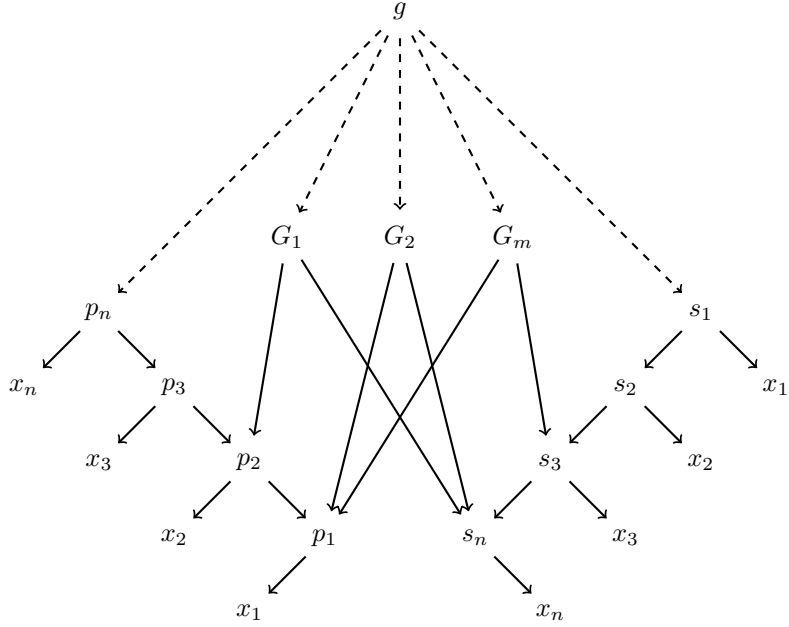


\tikzstyle{vertex}=[circle,font=\footnotesize,inner sep=0pt]
\tikzstyle{selected vertex} = [vertex, fill=blue!20]
\tikzstyle{edge} = [draw,thick,->]
\tikzstyle{selected edge} = [draw,line width=1pt,->,red]

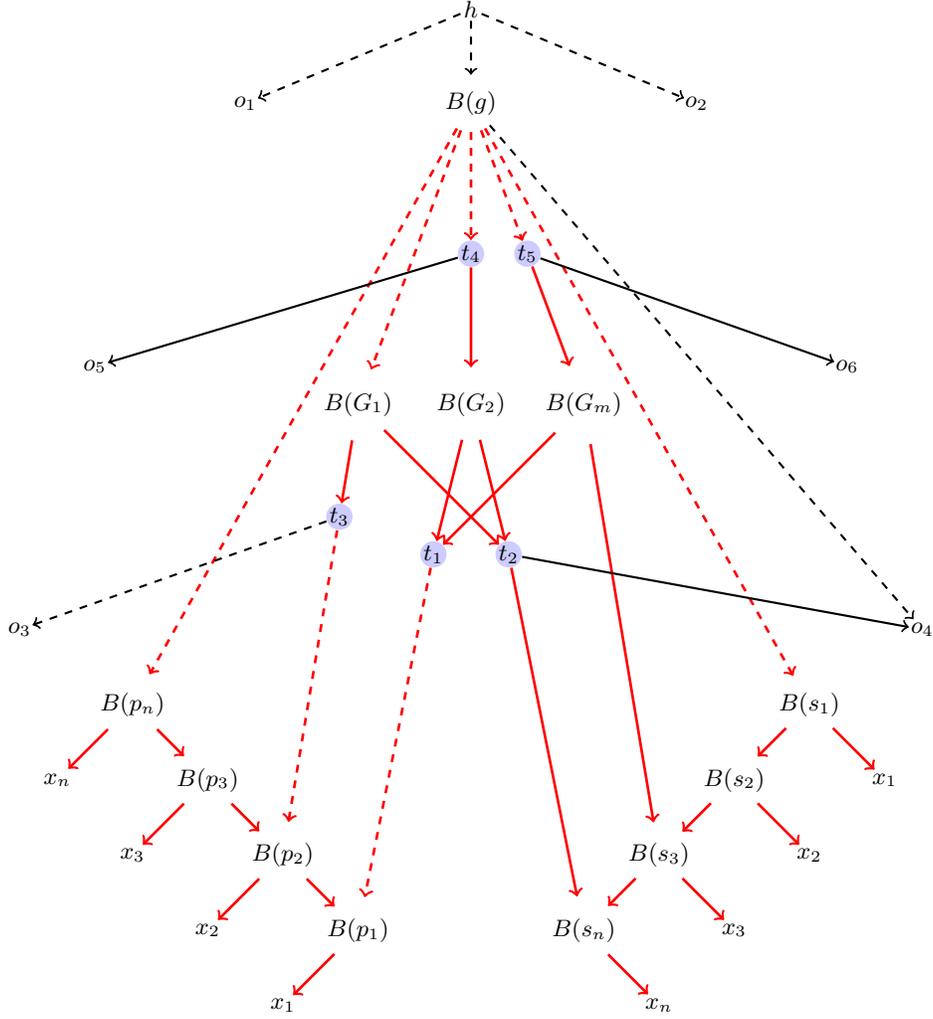
\begin{figure}
\center
\begin{tikzpicture}[scale=\myscale, auto,swap]
    
    \pgfmathsetmacro{\branchOneX}{-1.5}
    \pgfmathsetmacro{\branchTwoX}{0}
    \pgfmathsetmacro{\branchThreeX}{1.5}
    \pgfmathsetmacro{\branchY}{-4}
    \pgfmathsetmacro{\pRootX}{-4.5}
    \pgfmathsetmacro{\pRootY}{-8}
    \pgfmathsetmacro{\sRootX}{4.5}
    \pgfmathsetmacro{\sRootY}{-8}
    
    \pgfmathsetmacro{\ty}{-6}
    \pgfmathsetmacro{\txone}{- 1 / 2}
    \pgfmathsetmacro{\txtwo}{1 / 2}
    \pgfmathsetmacro{\txthree}{\branchOneX - 1/4}
    \pgfmathsetmacro{\tythree}{\ty + 0.5}
    \pgfmathsetmacro{\txfour}{\branchTwoX / 2}
    \pgfmathsetmacro{\tyfour}{\branchY / 2}
    \pgfmathsetmacro{\txfive}{\branchThreeX / 2}
    \pgfmathsetmacro{\tyfive}{\branchY / 2}

    \foreach \pos/\id/\name in {
    {(0,0)/g/B(g)},
    {(\pRootX,\pRootY)/pn/B(p_n)}, {(\pRootX+1,\pRootY-1)/p3/B(p_3)}, {(\pRootX+2,\pRootY-2)/p2/B(p_2)}, {(\pRootX+3,\pRootY-3)/p1/B(p_1)},
    {(\pRootX-1,\pRootY-1)/pxn/x_n}, {(\pRootX,\pRootY-2)/px3/x_3}, {(\pRootX+1,\pRootY-3)/px2/x_2}, {(\pRootX+2,\pRootY-4)/px1/x_1},
    {(\sRootX,\sRootY)/s1/B(s_1)}, {(\sRootX-1,\sRootY-1)/s2/B(s_2)}, {(\sRootX-2,\sRootY-2)/s3/B(s_3)}, {(\sRootX-3,\sRootY-3)/sn/B(s_n)},
    {(\sRootX+1,\sRootY-1)/sx1/x_1}, {(\sRootX,\sRootY-2)/sx2/x_2}, {(\sRootX-1,\sRootY-3)/sx3/x_3}, {(\sRootX-2,\sRootY-4)/sxn/x_n},
    {(\branchOneX,\branchY)/G1/B(G_1)},{(\branchTwoX,\branchY)/G2/B(G_2)},{(\branchThreeX,\branchY)/Gm/B(G_m)},
    {(\txone,\ty)/t1/t_1},{(\txtwo,\ty)/t2/t_2},{(\txthree,\tythree)/t3/t_3},{(\txfour,\tyfour)/t4/t_4},{(\txfive,\tyfive)/t5/t_5},
    {(0,1.25)/h/h}, {(-3,0)/o1/o_1}, {(3,0)/o2/o_2}, {(-6,-7)/o3/o_3}, {(6,-7)/o4/o_4},{(-5,-3.5)/o5/o_5},{(5,-3.5)/o6/o_6}}
        \node[vertex] (\id) at \pos {$\name$};
        
    \foreach \source/ \dest in {
    pn/p3,p3/p2,p2/p1,
    pn/pxn,p3/px3,p2/px2,p1/px1,
    s1/s2,s2/s3,s3/sn,
    s1/sx1,s2/sx2,s3/sx3,sn/sxn,
    G1/t2,
    G2/t1,G2/t2,
    Gm/t1,Gm/s3,
    G1/t3,
    t4/G2,t5/Gm,
    t2/sn}
        \path[selected edge] (\source) -- (\dest);
        
    \foreach \source/ \dest in {
    g/t4,g/G1,g/t5,g/pn,g/s1,t3/p2,t1/p1}
        \path[selected edge,dashed] (\source) -- (\dest);
        
    \foreach \source/ \dest in {
    t4/o5,t5/o6,t2/o4}
        \path[edge,] (\source) -- (\dest);

    \foreach \source/ \dest in {
    h/g,h/o1,h/o2,g/o4,t3/o3}
        \path[edge,dashed] (\source) -- (\dest);
    
    \foreach \id/\name in {t1/t_1,t2/t_2,t3/t_3,t4/t_4,t5/t_5}
        \path node[selected vertex] at (\id) {$\name$};

\end{tikzpicture}
\caption{A smooth and decomposable circuit \(h\) that is the output of a smoothing-gate algorithm on \(g\). The skeleton graph \(S\) is shown in red, and the set of gates \(T\) are circled in blue. We proceed by taking each \(\oplus\)-gate in \(T\) and removing their edges to children that are not in \(S\). This removes the edge \((t_3,o_3)\). Next, we remove the set of edges that goes from some gate in \(T \cup \{B(G_j) : \forall j\}\) to some gate in \(\{B(p_j) : \forall j\} \cup \{B(s_j): \forall j\}\). This removes the edges \((t_1,B(p_1)), (t_2,B(s_n)), (t_3, B(p_2)), (B(G_m),B(s_3))\). The gates \(t_1\) and \(t_3\) have no more children, so we prune them away. After this process, we get the circuit shown in Figure~\ref{fig:thm2circuit3}. \label{fig:thm2circuit2}}
\end{figure}


\tikzstyle{vertex}=[circle,font=\footnotesize,inner sep=0pt]
\tikzstyle{selected vertex} = [vertex, fill=blue!20]
\tikzstyle{edge} = [draw,thick,->,black!30]
\tikzstyle{selected edge} = [draw,line width=1pt,->,black]

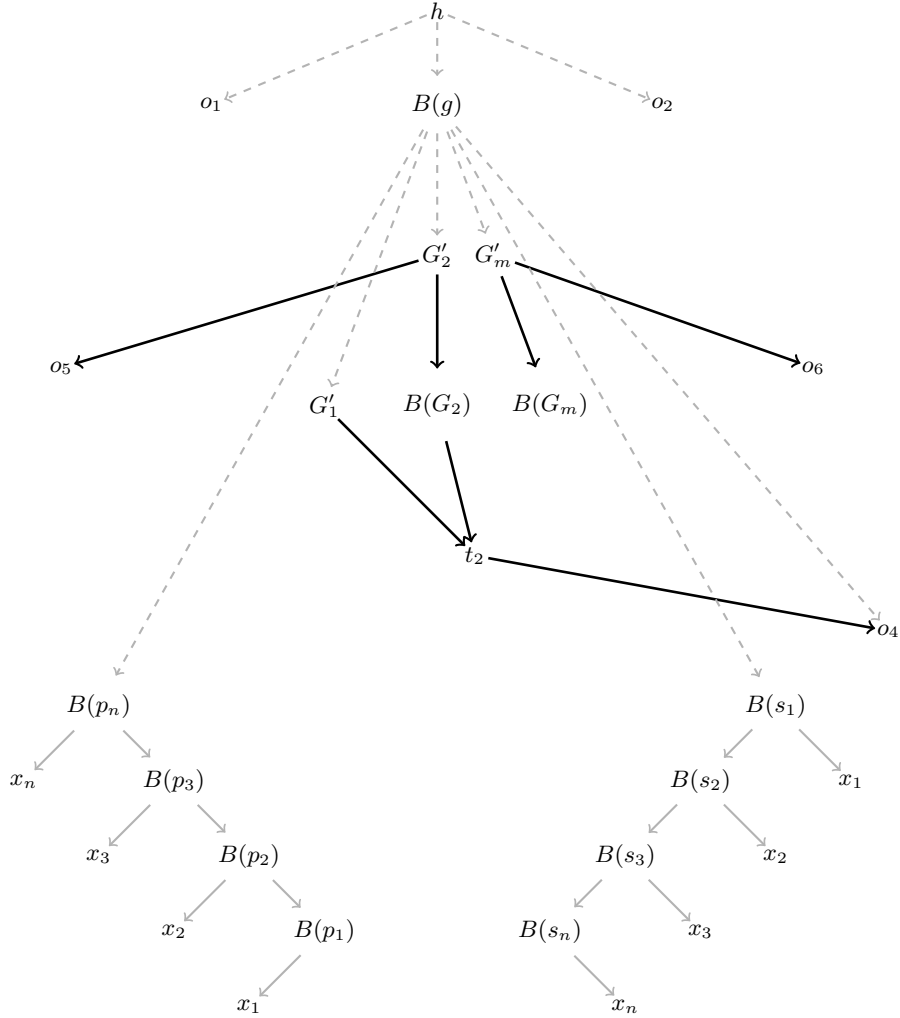
\begin{figure}
\center
\begin{tikzpicture}[scale=\myscale, auto,swap]

    \pgfmathsetmacro{\branchOneX}{-1.5}
    \pgfmathsetmacro{\branchTwoX}{0}
    \pgfmathsetmacro{\branchThreeX}{1.5}
    \pgfmathsetmacro{\branchY}{-4}
    \pgfmathsetmacro{\pRootX}{-4.5}
    \pgfmathsetmacro{\pRootY}{-8}
    \pgfmathsetmacro{\sRootX}{4.5}
    \pgfmathsetmacro{\sRootY}{-8}
    
    \pgfmathsetmacro{\ty}{-6}
    \pgfmathsetmacro{\txone}{- 1 / 2}
    \pgfmathsetmacro{\txtwo}{1 / 2}
    \pgfmathsetmacro{\txthree}{\branchOneX - 1/4}
    
    \pgfmathsetmacro{\tythree}{\ty + 0.5}
    \pgfmathsetmacro{\txfour}{\branchTwoX / 2}
    \pgfmathsetmacro{\tyfour}{\branchY / 2}
    \pgfmathsetmacro{\txfive}{\branchThreeX / 2}
    \pgfmathsetmacro{\tyfive}{\branchY / 2}

    \foreach \pos/\id/\name in {
    {(0,0)/g/B(g)},
    {(\pRootX,\pRootY)/pn/B(p_n)}, {(\pRootX+1,\pRootY-1)/p3/B(p_3)}, {(\pRootX+2,\pRootY-2)/p2/B(p_2)}, {(\pRootX+3,\pRootY-3)/p1/B(p_1)},
    {(\pRootX-1,\pRootY-1)/pxn/x_n}, {(\pRootX,\pRootY-2)/px3/x_3}, {(\pRootX+1,\pRootY-3)/px2/x_2}, {(\pRootX+2,\pRootY-4)/px1/x_1},
    {(\sRootX,\sRootY)/s1/B(s_1)}, {(\sRootX-1,\sRootY-1)/s2/B(s_2)}, {(\sRootX-2,\sRootY-2)/s3/B(s_3)}, {(\sRootX-3,\sRootY-3)/sn/B(s_n)},
    {(\sRootX+1,\sRootY-1)/sx1/x_1}, {(\sRootX,\sRootY-2)/sx2/x_2}, {(\sRootX-1,\sRootY-3)/sx3/x_3}, {(\sRootX-2,\sRootY-4)/sxn/x_n},
    {(\branchOneX,\branchY)/G1/G^\prime_1},{(\branchTwoX,\branchY)/G2/B(G_2)},{(\branchThreeX,\branchY)/Gm/B(G_m)},
    {(\txtwo,\ty)/t2/t_2},{(\txfour,\tyfour)/t4/G^\prime_2},{(\txfive,\tyfive)/t5/G^\prime_m},
    {(0,1.25)/h/h}, {(-3,0)/o1/o_1}, {(3,0)/o2/o_2}, {(6,-7)/o4/o_4},{(-5,-3.5)/o5/o_5},{(5,-3.5)/o6/o_6}}
        \node[vertex] (\id) at \pos {$\name$};
        
    \foreach \source/ \dest in {
    t4/o5,t5/o6,t2/o4,G1/t2,G2/t2,t4/G2,t5/Gm}
        \path[selected edge] (\source) -- (\dest);
        
    \foreach \source/ \dest in {
    g/pn,g/s1,h/g,h/o1,h/o2,g/o4,
    g/G1,g/t4,g/t5}
        \path[edge,dashed] (\source) -- (\dest);
        
    \foreach \source/ \dest in {
    pn/p3,p3/p2,p2/p1,
    pn/pxn,p3/px3,p2/px2,p1/px1,
    s1/s2,s2/s3,s3/sn,
    s1/sx1,s2/sx2,s3/sx3,sn/sxn}
        \path[edge] (\source) -- (\dest);

\end{tikzpicture}
\caption{The output circuit \(h\) implicitly contains a scheme for obtaining the sum of every input interval, thereby solving the semigroup range-sum problem using \(O(|h|)\) additions.\label{fig:thm2circuit3}}
\end{figure}

\section{Proof of Theorem~\ref{thm:allmarginal}}
Recall from Lemma~\ref{lem:intervalgap} that the set of missing variables of each parent-child pair forms at most two intervals with respect to the in-order traversal of the vtree. The idea now is that propagating the partial derivative to each interval amounts to a \emph{range increment}, i.e., increasing each variable in the interval by a constant. The naive algorithm takes quadratic time to do this for all intervals, but we can perform all range increments in linear time~\citep{RangeIncrementGFG}.

Consider an integer \(n\), a set of \(m\) intervals \([a_1,b_1],\ldots,[a_m,b_m]\) (\(1 \leq a_i \leq b_i \leq n\)), and \(m\) numeric constants \(c_1,\ldots,c_m\). For each integer \(1 \leq j \leq n\), we wish to compute the sum \(s_j = \bigoplus_{i : j \in [a_i,b_i]}{c_i}\). That is, if \(j\) belongs to some interval \([a_i,b_i]\), then we increase \(s_j\) by \(c_i\).
The trick is to keep track of delta variables \(\delta_1,\ldots,\delta_n\). For each interval \([a_i,b_i]\), we increase \(\delta_{a_i}\) by \(c_i\) and decrease \(\delta_{b_{i+1}}\) by \(c_i\). Finally, we output \(s_1 = \delta_1\) and \(s_j = s_{j-1} \oplus \delta_{j}, j > 1\). This process, which corresponds to Lines 11-14 and 16-17 in the \(\topdown{}\) subroutine of Algorithm~\ref{alg:allmarginals}, can be done in time \(O(m)\).

\end{document}